\pdfoutput=1
\RequirePackage[T1]{fontenc}
\documentclass[12pt]{article}
\usepackage[skip=1em plus 0.2em minus 0.1em]{parskip}
\usepackage[height=8.85in,width=6.45in]{geometry}

\usepackage[utf8]{inputenc}
\usepackage{amsmath}
\usepackage{amssymb}
\usepackage{mathtools}
\numberwithin{equation}{section}
\usepackage{slashed}
\usepackage{braket}
\usepackage{enumitem}
\usepackage[svgnames]{xcolor}
\usepackage[colorlinks,citecolor=DarkGreen,linkcolor=FireBrick,urlcolor=FireBrick,linktocpage,unicode]{hyperref}
\newcommand{\sys}{{\sc OASIS}\xspace}

\usepackage{multirow}
\usepackage{graphicx}
\newcommand{\rot}[1]{\rotatebox{90}{#1}}

\usepackage{tocloft}

\usepackage{footmisc}

\allowdisplaybreaks
\usepackage{graphicx}
\usepackage{tikz}
\usepackage{tikz-cd}
\usepackage{times}
 
\usepackage{bm}
\usepackage{physics}
\usepackage{xcolor}
\usepackage{natbib}
\usepackage{mdframed}
\usepackage{nicefrac}
\usepackage{booktabs}
\usepackage{lipsum}
\usepackage{titlesec}
\usepackage{wrapfig,lipsum,booktabs}
\usepackage{authblk}
\usepackage{blindtext}
\usepackage[font=small]{caption}
\newcommand{\Entmax}{\mathop{\rm{Entmax15}}}
\usepackage{makecell}

\usepackage{algorithm}
\usepackage{algpseudocode}
\newcommand{\commentsymbol}{//}%
\algrenewcommand\algorithmiccomment[1]{\hfill {\footnotesize \commentsymbol{} #1}}

\usepackage{titletoc}
\usepackage{todonotes}
\usepackage{authblk}
\usepackage{setspace}
\usepackage{dsfont} %

\usepackage[nameinlink,capitalize,noabbrev]{cleveref}

\usepackage{CJK}

\usepackage{array}
\usepackage{bbm}
\usepackage{makecell}

\usepackage{dashbox}
\usepackage{xcolor}
\usepackage{colortbl}
\definecolor{lightyellow}{rgb}{1.0, 0.95, 0.7}
\definecolor{Blue}{rgb}{0, 0, 0.8}
\definecolor{blue}{rgb}{0,0,1}
\definecolor{darkgreen}{rgb}{0,0.40,0}
\definecolor{firebrick}{rgb}{0.698,0.133,0.133}

\definecolor{colorA}{rgb}{1,0,0}
\definecolor{colorB}{rgb}{0,0.3,1}
\definecolor{colorC}{rgb}{0.9,0.8,0.2}
\definecolor{colorD}{rgb}{0,0.65,0}
\definecolor{lesslightgray}{rgb}{0.5,0.5,0.5}
\definecolor{light-gray}{gray}{0.95}

\let\tilde\widetilde

\newcommand{\Softmax}{\mathop{\rm{Softmax}}}
\newcommand{\Sparsemax}{\mathop{\rm{Sparsemax}}}

\def\P{\mathbb{P}}

\let\cite\citep 

\usepackage{amsthm}
\usepackage[many]{tcolorbox}
\usepackage{etoolbox}
\BeforeBeginEnvironment{proof}{\par\vspace{-1\baselineskip}}
\AfterEndEnvironment  {proof}{\par\vspace{-1.5\baselineskip}}

\newtheoremstyle{theoremstyle}
  {.5\baselineskip} %
  {.5\baselineskip} %
  {}                  %
  {}                  %
  {\bfseries}        %
  {.}                 %
  {1em}               %
  {}                  %

\theoremstyle{theoremstyle}
\newtheorem{theorem}{Theorem}[section]
\newtheorem{lemma}{Lemma}[section]

\newtheorem{proposition}{Proposition}[section]
\newtheorem{definition}{Definition}[section]
\newtheorem{assumption}{Assumption}[section]

\tcolorboxenvironment{theorem}{
  breakable,
  colback=black!10,
  colframe=white,%
  width=\dimexpr\linewidth+10pt\relax,%
  enlarge left by=-5pt,%
  enlarge right by=-5pt,%
  boxsep=5pt,%
  boxrule=0pt,
  left=0pt,right=0pt,top=0pt,bottom=0pt,
  sharp corners,
  before skip=0.5\baselineskip, %
  after skip=0.5\baselineskip,  %
  fonttitle=\bfseries, %
  coltitle=black %
}
\tcolorboxenvironment{remark}{
  blanker,
  breakable,
  before skip=.8\baselineskip,  %
  after  skip=.8\baselineskip   %
}

\tcolorboxenvironment{proposition}{
  breakable,
  colback=black!10,
  colframe=white,%
  width=\dimexpr\linewidth+10pt\relax,%
  enlarge left by=-5pt,%
  enlarge right by=-5pt,%
  boxsep=5pt,%
  boxrule=0pt,
  left=0pt,right=0pt,top=0pt,bottom=0pt,
  sharp corners,
  before skip=0.5\baselineskip, %
  after skip=0.5\baselineskip,  %
  fonttitle=\bfseries, %
  coltitle=black %
}

\tcolorboxenvironment{lemma}{
  breakable,
  colback=black!10,
  colframe=white,%
  width=\dimexpr\linewidth+10pt\relax,%
  enlarge left by=-5pt,%
  enlarge right by=-5pt,%
  boxsep=5pt,%
  boxrule=0pt,
  left=0pt,right=0pt,top=0pt,bottom=0pt,
  sharp corners,
  before skip=0.5\baselineskip, %
  after skip=0.5\baselineskip,  %
  fonttitle=\bfseries, %
  coltitle=black %
}

\tcolorboxenvironment{corollary}{
  breakable,
  colback=black!10,
  colframe=white,%
  width=\dimexpr\linewidth+10pt\relax,%
  enlarge left by=-5pt,%
  enlarge right by=-5pt,%
  boxsep=5pt,%
  boxrule=0pt,
  left=0pt,right=0pt,top=0pt,bottom=0pt,
  sharp corners,
  before skip=0.5\baselineskip, %
  after skip=0.5\baselineskip,  %
  fonttitle=\bfseries, %
  coltitle=black %
}

\tcolorboxenvironment{definition}{
  breakable,
  colback=black!10,
  colframe=white,%
  width=\dimexpr\linewidth+10pt\relax,%
  enlarge left by=-5pt,%
  enlarge right by=-5pt,%
  boxsep=5pt,%
  boxrule=0pt,
  left=0pt,right=0pt,top=0pt,bottom=0pt,
  sharp corners,
  before skip=0.5\baselineskip, %
  after skip=0.5\baselineskip,  %
  fonttitle=\bfseries, %
  coltitle=black %
}
\tcolorboxenvironment{assumption}{
  breakable,
  colback=black!10,
  colframe=white,%
  width=\dimexpr\linewidth+10pt\relax,%
  enlarge left by=-5pt,%
  enlarge right by=-5pt,%
  boxsep=5pt,%
  boxrule=0pt,
  left=0pt,right=0pt,top=0pt,bottom=0pt,
  sharp corners,
  before skip=0.5\baselineskip, %
  after skip=0.5\baselineskip,  %
  fonttitle=\bfseries, %
  coltitle=black %
}

\crefname{theorem}{Theorem}{Theorems}
\crefname{proposition}{Proposition}{Propositions}
\crefname{lemma}{Lemma}{Lemmas}
\crefname{corollary}{Corollary}{Corollaries}
\crefname{definition}{Definition}{Definitions}
\crefname{assumption}{Assumption}{Assumptions}
\crefname{remark}{Remark}{Remarks}
\crefname{problem}{Problem}{Problems}
\crefname{property}{Property}{property}

\tcolorboxenvironment{hypothesis}{
  breakable,
  colback=black!10,
  colframe=white,%
  width=\dimexpr\linewidth+10pt\relax,%
  enlarge left by=-5pt,%
  enlarge right by=-5pt,%
  boxsep=5pt,%
  boxrule=0pt,
  left=0pt,right=0pt,top=0pt,bottom=0pt,
  sharp corners,
  before skip=0.5\baselineskip, %
  after skip=0.5\baselineskip,  %
  fonttitle=\bfseries, %
  coltitle=black %
}
\crefname{hypothesis}{Hypothesis}{Hypothesises}

\tcolorboxenvironment{fact}{
  breakable,
  colback=black!10,
  colframe=white,%
  width=\dimexpr\linewidth+10pt\relax,%
  enlarge left by=-5pt,%
  enlarge right by=-5pt,%
  boxsep=5pt,%
  boxrule=0pt,
  left=0pt,right=0pt,top=0pt,bottom=0pt,
  sharp corners,
  before skip=0.5\baselineskip, %
  after skip=0.5\baselineskip,  %
  fonttitle=\bfseries, %
  coltitle=black %
}
\crefname{fact}{Fact}{Facts}

\tcolorboxenvironment{example}{
  breakable,
  colback=black!10,
  colframe=white,%
  width=\dimexpr\linewidth+10pt\relax,%
  enlarge left by=-5pt,%
  enlarge right by=-5pt,%
  boxsep=5pt,%
  boxrule=0pt,
  left=0pt,right=0pt,top=0pt,bottom=0pt,
  sharp corners,
  before skip=0.5\baselineskip, %
  after skip=0.5\baselineskip,  %
  fonttitle=\bfseries, %
  coltitle=black %
}
\crefname{example}{Example}{Examples}

\tcolorboxenvironment{question}{
  breakable,
  colback=black!10,
  colframe=white,%
  width=\dimexpr\linewidth+10pt\relax,%
  enlarge left by=-5pt,%
  enlarge right by=-5pt,%
  boxsep=5pt,%
  boxrule=0pt,
  left=0pt,right=0pt,top=0pt,bottom=0pt,
  sharp corners,
  before skip=0.5\baselineskip, %
  after skip=0.5\baselineskip,  %
  fonttitle=\bfseries, %
  coltitle=black %
}

\crefname{question}{Question}{Questions}
\numberwithin{equation}{section}
\numberwithin{theorem}{section}
\numberwithin{proposition}{section}
\numberwithin{definition}{section}
\numberwithin{lemma}{section}
\numberwithin{assumption}{section}
\numberwithin{remark}{section}

\usepackage{lipsum}

\makeatletter
\let\save@mathaccent\mathaccent
\newcommand*\if@single[3]{%
    \setbox0\hbox{${\mathaccent"0362{#1}}^H$}%
    \setbox2\hbox{${\mathaccent"0362{\kern0pt#1}}^H$}%
    \ifdim\ht0=\ht2 #3\else #2\fi
}
\newcommand*\rel@kern[1]{\kern#1\dimexpr\macc@kerna}
\newcommand*\widebar[1]{\@ifnextchar^{{\wide@bar{#1}{0}}}{\wide@bar{#1}{1}}}
\newcommand*\wide@bar[2]{\if@single{#1}{\wide@bar@{#1}{#2}{1}}{\wide@bar@{#1}{#2}{2}}}
\newcommand*\wide@bar@[3]{%
    \begingroup
    \def\mathaccent##1##2{%
        \let\mathaccent\save@mathaccent
        \if#32 \let\macc@nucleus\first@char \fi
        \setbox\z@\hbox{$\macc@style{\macc@nucleus}_{}$}%
        \setbox\tw@\hbox{$\macc@style{\macc@nucleus}{}_{}$}%
        \dimen@\wd\tw@
        \advance\dimen@-\wd\z@
        \divide\dimen@ 3
        \@tempdima\wd\tw@
        \advance\@tempdima-\scriptspace
        \divide\@tempdima 10
        \advance\dimen@-\@tempdima
        \ifdim\dimen@>\z@ \dimen@0pt\fi
        \rel@kern{0.6}\kern-\dimen@
        \if#31
        \overline{\rel@kern{-0.6}\kern\dimen@\macc@nucleus\rel@kern{0.4}\kern\dimen@}%
        \advance\dimen@0.4\dimexpr\macc@kerna
        \let\final@kern#2%
        \ifdim\dimen@<\z@ \let\final@kern1\fi
        \if\final@kern1 \kern-\dimen@\fi
        \else
        \overline{\rel@kern{-0.6}\kern\dimen@#1}%
        \fi
    }%
    \macc@depth\@ne
    \let\math@bgroup\@empty \let\math@egroup\macc@set@skewchar
    \mathsurround\z@ \frozen@everymath{\mathgroup\macc@group\relax}%
    \macc@set@skewchar\relax
    \let\mathaccentV\macc@nested@a
    \if#31
    \macc@nested@a\relax111{#1}%
    \else
    \def\gobble@till@marker##1\endmarker{}%
    \futurelet\first@char\gobble@till@marker#1\endmarker
    \ifcat\noexpand\first@char A\else
    \def\first@char{}%
    \fi
    \macc@nested@a\relax111{\first@char}%
    \fi
    \endgroup
    }
\makeatother

\usepackage{cleveref}

\definecolor{blue}{named}{black}

\newcommand*{\email}[1]{\footnote{\href{mailto:#1}{\texttt{#1}}}}

\setlist[itemize,enumerate]{
  parsep=\parskip,                                   %
  itemsep=\dimexpr .3em - \parskip\relax plus 2pt,   %
  topsep=\dimexpr 6pt - \parskip\relax plus 1pt minus 1pt,
  partopsep=0pt,
  listparindent=\parindent
}

\begin{document}
\begin{titlepage}

\begin{flushright}
Last Update: \today
\end{flushright}

\vskip 2.5em
\begin{center}

{
\LARGE \bfseries %
\begin{spacing}{1.15} %
Attention Sinks and Outliers in Attention Residuals
\end{spacing}
}

\vskip 1em
Haozheng Luo$^{\dagger*}$\email{hluo@northwestern.edu}
\quad
Haoran Dai$^{\ddag*}$\email{hdai10@hawk.illinoistech.edu}
\quad
Shaoyang Zhang$^{\S}$
\quad
Xi Chen$^{\dagger}$
\quad
Hanchen Jiang$^{\P}$ \\
Yijiang Li$^{\|}$
\quad
Jingyuan Huang$^{\#}$
\quad 
Chenghao Qiu$^{\diamond}$
\quad 
Chenwei Xu$^{\dagger}$
\quad
Zhenyu Pan$^{\dagger}$
\\
Haotian Zhang$^{\natural}$
\quad
Binghui Wang$^{\ddag}$\email{bwang70@illinoistech.edu}
\quad
Yan Chen$^{\dagger}$\email{ychen@northwestern.edu}
\quad 

\def\thefootnote{*}
\footnotetext{These authors contributed equally to this work.}

\vskip 1em

{\small
\begin{tabular}{ll}
 $^\dagger\;$Department of Computer Science, Northwestern University\\
 $^\ddag\;$Department of Computer Science, Illinois Institute of Technolog\\
 $^\S\;$Department of Computer Science and Engineering, University of Michigan\\
 $^\P\;$Department of Statistics and Data Science, University of California Los Angeles\\
 $^\|\;$Department of Electrical and Computer Engineering, University of California San Diego\\
 $^\#\;$Department of Computer Science, Rutgers University-New Brunswick\\
  $^\diamond\;$Department of Computer Science and Engineering, Texas A\&M University\\
  $^\natural\;$Department of Computer Science, Columbia University
\end{tabular}}

\end{center}

\noindent
We propose \textbf{OASIS}, an outlier- and sink-aware technique built on low-information signal propagation across layers. As AttnResidual architectures introduce an additional depth-wise normalization channel, they improve inter-layer routing flexibility but also exacerbate attention sinks, activation outliers, and the resulting degradation in inference stability and quantization robustness. OASIS addresses this issue by introducing a $\Softmax_1$-based null space and coupling token-level null evidence to depth routing through an inter-layer null signal, thereby reducing sink-dominated routing and improving structural robustness. Theoretically, we show that the dual-normalization design of AttnResidual intensifies sink formation and quantization brittleness. Experimentally, we compare \sys against five baselines on three real-world datasets and observe consistent improvements in both attention sink and post-quantization performance. Notably, relative to Vanilla, \sys achieves an average reduction of \textbf{84.14\%} in maximum infinity norm and \textbf{96.77\%} in average kurtosis across the two backbones, while lowering perplexity by \textbf{73.55\%} under W8A8 and improving GSM8K Pass@1 by \textbf{23.47\%} under W4A4.

\end{titlepage}

{
\setlength{\parskip}{0em}
\setcounter{tocdepth}{2}
\tableofcontents
}
\setcounter{footnote}{0}

\clearpage

\section{Introduction}
\label{sec:intro}

We propose \textbf{OASIS}, an outlier- and sink-aware method built on inter-layer null signaling. These null signals are induced by the $\Softmax_1$ function, following its use in \citet{luo2026frost}.

Modern foundation models are largely built on the Transformer architecture introduced by \citet{vaswani2017attention}, which has enabled strong performance across a wide range of tasks \cite{luo2025fast, liu2024llava, he2024st}. To further enhance model capacity, \citet{team2026attention} propose AttnResidual, which replaces conventional single-level Softmax normalization with a dual-normalization scheme that jointly governs token-level and depth-level attention routing.
However, practical deployment increasingly requires robustness under aggressive compression and long-context inference, where attention outliers~\cite{hu2024outlier} and attention sink accumulation~\cite{xiao2023efficient} become critical bottlenecks that degrade quantization fidelity and destabilize long-range attention allocation. 
Although AttnResidual improves representational expressiveness, it also structurally aggravates both pathologies: the additional simplex constraint on depth routing amplifies no-op outlier formation and intensifies sink concentration, because near-zero residual updates must be expressed through real tokens and real branches rather than an explicit null channel. 

To address this failure mode, we propose \textbf{OASIS}, a lightweight normalization-level intervention that targets the structural cause rather than its symptoms. OASIS replaces both token-level and depth-level $\Softmax$ operators in AttnResidual with null-aware routing, introducing an explicit null channel to absorb no-op mass and a \textit{token-to-depth null coupling} mechanism that feeds token-level null evidence into depth routing, so that branches with stronger null behavior are downweighted and routing mass is redirected toward informative branches or a depth-level null path. This design suppresses outliers and sink accumulation, alleviates depth-collapse pressure, and improves quantization robustness, while requiring no change to the training objective and only minimal architectural modification.

\textbf{Contributions.} We present \textbf{OASIS}, a lightweight normalization-based method for mitigating outliers, attention sinks, and depth-collapse in AttnResidual architectures. Our main contributions are:
\begin{itemize}[leftmargin=*]
    \item We identify a structural failure mode in AttnResidual: the dual-normalization design amplifies attention outliers, sink accumulation, and depth-collapse pressure, which together degrade inference stability and quantization robustness.
    \item Methodologically, we propose \textbf{OASIS}, a lightweight normalization-based intervention that replaces token- and depth-level $\Softmax$ operators with null-aware routing $\Softmax_1$ and introduces a token-to-depth null coupling mechanism to suppress no-op outliers and sink-dominated routing.
    \item Empirically, we show that OASIS consistently improves robustness across backbone models and evaluation settings: averaged over Llama-3.2-1B and Qwen3-0.6B, it reduces average kurtosis by \textbf{96.77\%} and maximum infinity norm by \textbf{84.14\%}, while lowering perplexity by \textbf{73.55\%} under W8A8 and improving GSM8K Pass@1 by \textbf{23.47\%} under W4A4, relative to Vanilla.
\end{itemize}

\begin{figure}
    \centering
    \includegraphics[width=\linewidth]{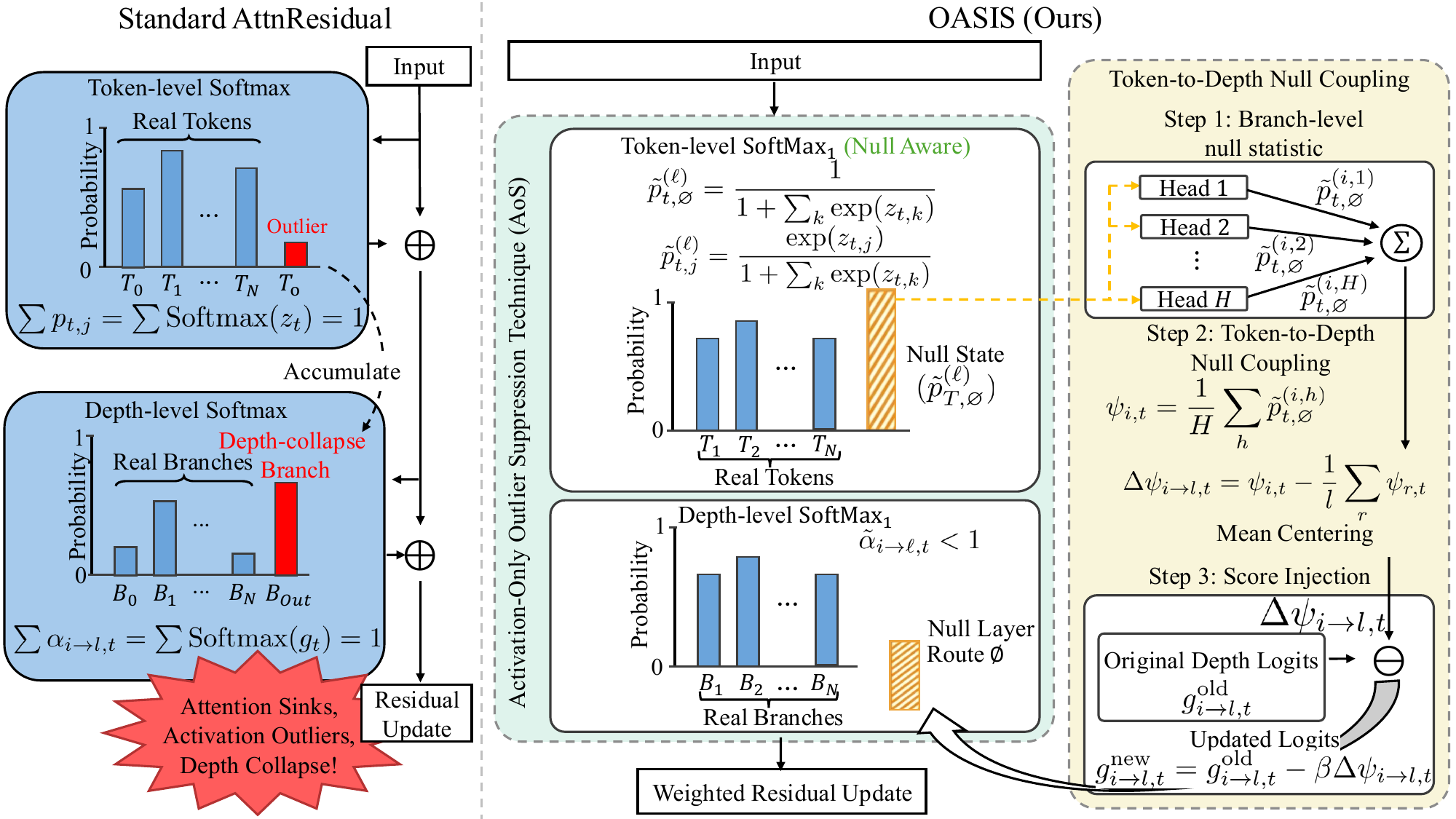}
    \caption{Comparison of standard AttnResidual and OASIS. OASIS adds null-aware token- and depth-level routing with token-to-depth null coupling, enabling explicit no-op allocation and reducing no-op outlier pressure, sink accumulation, and depth-collapse pressure.}
    \label{fig:fig1}
    \vspace{-0.2in}
\end{figure}

\section{Related Work}
\label{sec:background}
\textbf{Outliers in Quantization.}
Transformer outliers are closely tied to structured attention and uneven token contributions. Early analyses show that BERT attention concentrates on special tokens and exhibits regular, sometimes degenerate patterns~\citep{clark2019what,kovaleva2019dark}, while token-vector norms further govern token influence~\citep{kobayashi2020attention}. These properties directly impact quantization, where outlier structure hinders efficiency and robustness~\citep{bondarenko2021understanding,shkolnik2020robust}.
Recent work links outliers to normalization-induced allocation constraints. Standard softmax can induce outliers when attention heads approximate no-op or partial updates~\citep{bondarenko2023quantizable}, and first-token-dominated attention has been associated with large activations~\citep{kaul2025attentionactivation}. Alternative designs, including Gumbel-Softmax~\citep{dadgarnia2026gsq}, Gated softmax variants~\citep{bondarenko2023quantizable}, and $\Softmax_1$-based attention~\citep{hu2024outlier}, aim to relax these constraints. For example, $\Softmax_1$~\cite{luo2025fast} modifies standard softmax by adding a constant term to the denominator, enabling attention heads to allocate mass to a “null” state and thus avoid forcing extreme logits for suppression. In contrast, \sys uses a dual null-space router to explicitly allocate irrelevant or redundant information into a learned null subspace, decoupling attention allocation from mandatory probability normalization. This design removes the need for extreme logit amplification by allowing tokens to be routed to a functionally inactive subspace, thereby mitigating normalization-induced outliers at their source.

\textbf{Attention Sinks.}
Attention sinks~\citep{xiao2023efficient} reflect a normalization-induced bias in attention allocation. They emerge during training, exhibit layer-wise hierarchy, for example, primary vs. secondary~\citep{gu2025attentionsink,wong2025existence}, and are closely tied to first-token dominance, which stabilizes representations by limiting over-mixing~\citep{barbero2025firsttoken}. Both empirical and theoretical work links sinks to the feasible set under softmax normalization, showing they can be functionally necessary~\citep{ranmilo2026sinks}. Accordingly, StreamingLLM~\citep{xiao2023efficient} and follow-ups~\citep{gu2025attentionsink,wong2025existence} leverage sink structure for long-context modeling. Similar sink-like concentration patterns are also observed in multimodal transformers, including vision~\citep{luo2026to,kang2025see} and audio~\citep{anand2026mitigating} models, revealing the fact that sink formation is a modality-agnostic consequence of attention normalization.
At the system level, sinks motivate efficient designs that either preserve or suppress them. H$_2$O~\citep{zhang2023h2o} exploits high-attention heavy hitters for KV-cache eviction, while DuoAttention~\citep{xiao2025duoattention} and KVSink~\citep{su2025kvsink} more explicitly preserve sink structure for efficient inference. Prefixing Attention Sinks~\citep{son2024prefixing} and Softpick~\citep{zuhri2025softpick} aim to mitigate them via architectural or normalization changes. In contrast, building on AttnResidual's softmax-based depth aggregation~\citep{team2026attention}, we argue that coupled token- and depth-level normalization constraints force near no-op updates onto real tokens and branches, amplifying sink accumulation. We therefore treat sinks as a robustness issue and mitigate them via explicit null destinations and token-to-depth coupling.

\section{Failure Modes of Dual Normalization in AttnResidual}
\label{sec:observe}
In this section, we analyze the failure modes introduced by AttnResidual~\citep{gomez2026attnres} in Llama-3.2-1B relative to a vanilla baseline. We show that its dual-normalization design gives rise to two coupled pathologies: intensified attention sinks and amplified activation outliers. We then trace both effects to a common architectural cause and motivate \sys as a principled remedy.
\begin{figure}[htp]
    \centering
    \vspace{-0.2in}
    \includegraphics[width=0.95\linewidth]{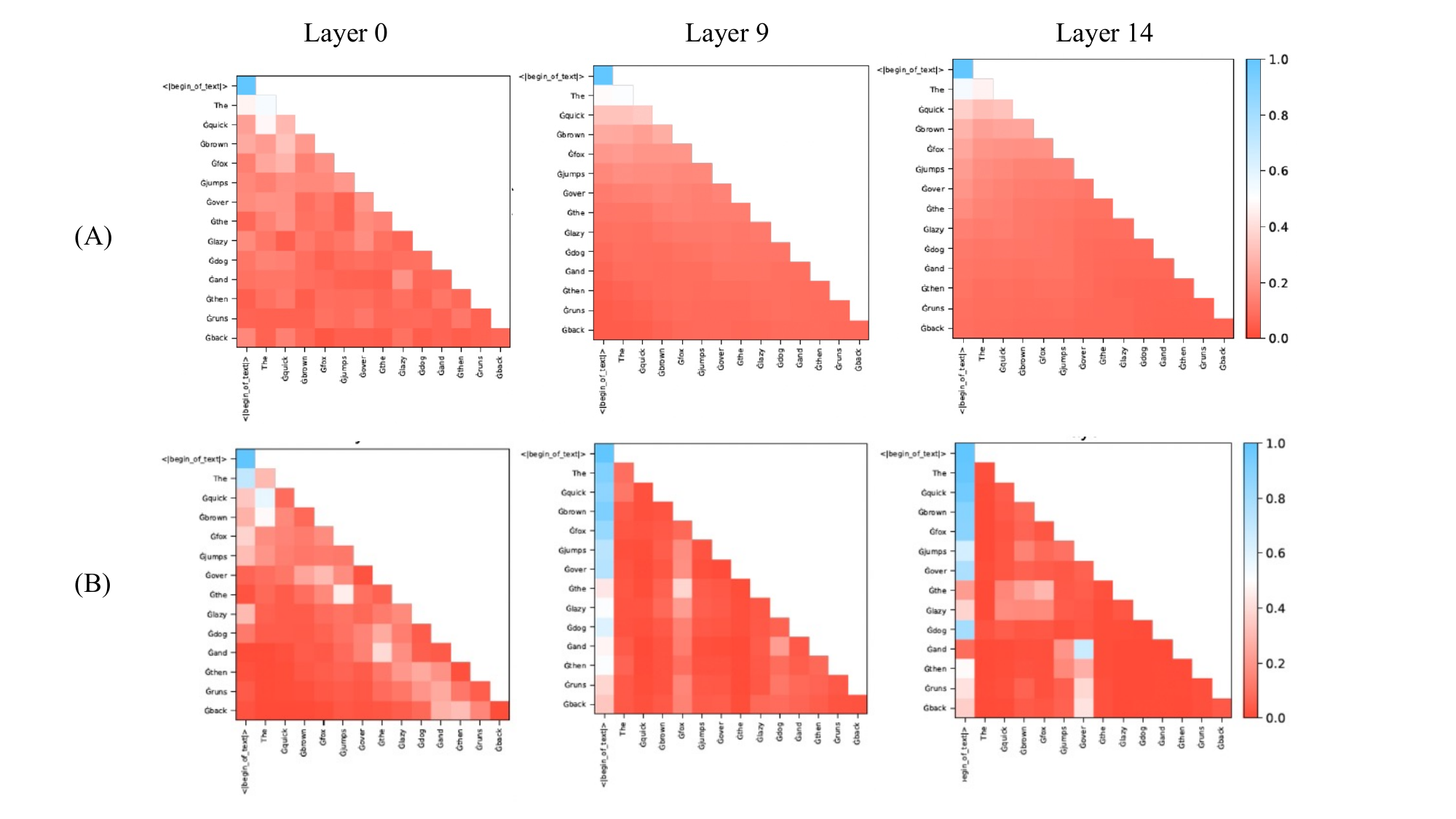}
   \caption{\textbf{Attention sink visualization.} We visualize token-level attention maps for the (A) vanilla Transformer and (B) AttnResidual variant at layers 0, 9, and 14. The results show that the \texttt{<|begin\_of\_text|>} token acts as a persistent attention sink, and that its concentration becomes progressively stronger with depth, particularly under the dual-normalization design of AttnResidual.}
    \label{fig:sink}
    \vspace{-0.35in}
\end{figure}

\textbf{Intensified attention sinks.} We begin by examining token-level attention maps across depth. Although the \texttt{<|begin\_of\_text|>} token is already known to act as an attention sink in standard Transformers~\citep{xiao2023efficient}, we find that this effect becomes substantially more severe under AttnResidual (as shown in \cref{fig:sink}). In particular, the additional $\Softmax$ in the depth-routing branch reinforces sink accumulation rather than mitigating it.

\textbf{Outlier amplification.} We next quantify the corresponding activation statistics across layers. Compared with the baseline, which maintains average
kurtosis around 3 and $\|x\|_\infty < 5$, AttnResidual produces substantially heavier-tailed hidden states (as shown in \cref{fig:outlier}). Kurtosis exceeds 100 in the first layer and remains at 50--70 thereafter, while $\|x\|_\infty$ reaches as high as 28. Such amplification is known to impair post-training quantization~\citep{hu2024outlier,llm.int.8} and destabilize long-context inference.

\begin{figure}[htp]
    \centering
    \vspace{-0.1in}
    \includegraphics[width=\linewidth]{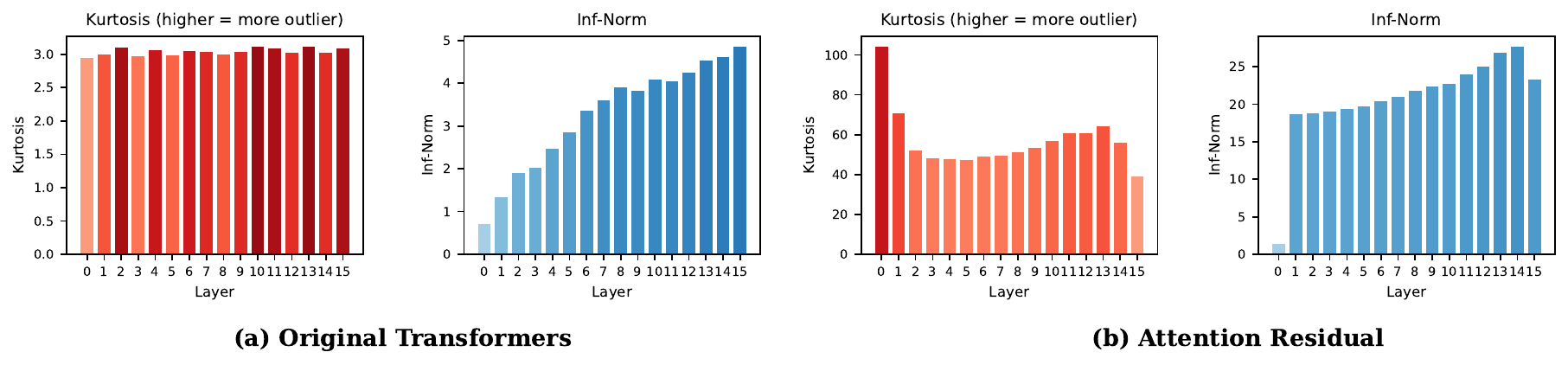}
    \vspace{-0.2in}
    \caption{\textbf{Outlier amplification in AttnResidual.} We visualize the hidden-state kurtosis (left) and infinity norm (right) across layers for the original Transformer and the AttnResidual variant. The results show that AttnResidual produces substantially larger kurtosis and activation magnitudes throughout the network, indicating that dual $\Softmax$ normalization amplifies outlier channels relative to the single-normalization baseline.}
    \label{fig:outlier}
    \vspace{-0.4in}
\end{figure}

\textbf{Mechanistic implications.} We find that both pathologies arise from the same architectural limitation: standard $\Softmax$ allocates all probability mass to active tokens and depth branches, leaving no null pathway. Consequently, approximating an identity update requires extreme pre-softmax logits, which unavoidably magnify both sink attention and activation outliers. Rather than suppressing these effects post hoc, \sys~(\Cref{sec:method}) introduces a $\Softmax_1$-based null-routing mechanism at both the token and depth levels, allowing residual mass to be absorbed without distorting active allocations.

\section{Methodology}
\label{sec:method}

Section~\ref{sec:observe} attributes sinks and outliers to a normalization-induced bottleneck in $\Softmax$, which leads to activations that degrades quantization robustness and long-context stability.
We therefore propose \textbf{OASIS} (\cref{fig:fig1}), a minimal modification to the dual-normalization design of AttnResidual. OASIS introduces an explicit null-attention capacity while preserving the standard training objective and overall architecture. 
In effect, OASIS replaces token- and depth-level $\Softmax$ with a $\Softmax_1$-style null-aware routing mechanism, enabling non-informative mass to be decoupled from token competition via an explicit null state in the normalization.
This leads to three practical benefits: 
(1) reduced activation outliers, improving quantization robustness and model coverage; 
(2) suppression of both attention sinks and depth-wise sink accumulation; and 
(3) mitigation of depth collapse.

\subsection{Notation}
\label{sub:notation}
Following \citet{team2026attention}, the depth update at layer $\ell$ is:
$$
h_{\ell}=\alpha_{0\to\ell}\cdot h_1+\sum_{i=1}^{\ell-1}\alpha_{i\to\ell}\cdot f_i(h_i),
$$
with layer-mixing weights satisfying
$
\sum_{i=0}^{\ell-1}\alpha_{i\to\ell}=1.
$
Here, $f_i(\cdot)$ is the $i$-th layer transformation and $h_1$ is the initial residual state. For implementation, $\alpha_{i\to\ell}$ is typically produced by a depth-$\Softmax$ parameterization:
\begin{align*}
\alpha_{i\to\ell}=\frac{\exp g_{i\to\ell}}{\sum_{r=0}^{\ell-1}\exp g_{r\to\ell}},
\qquad i\in\{0,1,\dots,\ell-1\},
\end{align*}
where $g$ is a depth-routing \emph{score} (logit). Inside each $f_i$, token-level attention is
$$
p_{t,j}^{(\ell)}=\frac{\exp z_{t,j}^{(\ell)}}{\sum_{k=1}^{m}\exp z_{t,k}^{(\ell)}},
\qquad
A_t^{(\ell)}=\sum_{j=1}^{m}p_{t,j}^{(\ell)}v_j^{(\ell)}.
$$
Hence AttnResidual couples two normalizations: token $\Softmax$ ($p$) and depth $\Softmax$ ($\alpha$).
For theorem statements below, we write $\alpha_t^{(\ell)}$ as shorthand for depth weights when the index is implicit.

\subsection{Activation-Only Outlier Suppression Technique (AoS)}
\label{sub:aos}
As established in \citet{xiao2023efficient}, in autoregressive models, early tokens are visible to all subsequent positions, 
while later tokens are only attended to locally. This asymmetry leads to the concentration of attention on early tokens, forming attention sinks.
Beyond token-level attention sinks, a similar phenomenon also appears across layers. 
\citet{wong2025existence} identify a two-tier sink hierarchy: 
\emph{primary} sinks emerge in the earliest layers and persist throughout, 
whereas \emph{secondary} sinks arise in the middle layers and dissipate after a variable number of layers.
To mitigate attention outliers and sink behavior, we replace the standard Softmax with a $\mathrm{Softmax}_1$ \cite{miller2021} variant in both token-wise and layer-wise attention.
Concretely, $\Softmax_1$ introduces an additional null state into the normalization, 
allowing the non-informative attention mass to be absorbed separately instead of being assigned to real tokens. This reduces the concentration of the sink and mitigates outlier amplification.
We now formalize this mechanism at the token level in Equation \ref{eq:softmax1-token-trf}, and depth level in Equation \ref{eq:softmax1-depth-trf}:
\begin{align}
\tilde p_{t,j}^{(\ell)}=\frac{\exp z_{t,j}^{(\ell)}}{1+\sum_{k=1}^{m}\exp z_{t,k}^{(\ell)}},
\qquad
\tilde p_{t,\varnothing}^{(\ell)}=\frac{1}{1+\sum_{k=1}^{m}\exp z_{t,k}^{(\ell)}}, %
\label{eq:softmax1-token-trf}
\end{align}
\begin{align}
\tilde{\alpha}_{i \to \ell} 
= \frac{\exp g_{i \to \ell}}{1 + \sum_{k=0}^{\ell-1} \exp g_{k \to \ell}}, 
\quad
\tilde{\alpha}_{\varnothing \to \ell} 
= \frac{1}{1 + \sum_{k=0}^{\ell-1} \exp g_{k \to \ell}}.
\label{eq:softmax1-depth-trf}
\end{align}
Since the mass assigned to real tokens sums to less than one, part of the mass can be explicitly routed to null space. Analogously, replacing
depth-$\Softmax$ by depth-$\Softmax_1$ provides a layer-null route that absorbs no-op routing pressure
rather than concentrating it on sink-prone branches.

\subsection{Token-to-Depth Null Coupling (OASIS)}
While $\Softmax_1$ is null-aware, token- and depth-level routing remain decoupled,
so depth may still weight uninformative layers. 
We introduce \textbf{OASIS}, a coupling mechanism that propagates token-level null evidence into depth routing.
The key observation is that each source layer contributes a residual branch to depth routing. When most attention heads within such a branch route mass to the null space, the resulting branch output carries little informative content and should receive a lower depth weight.

\textbf{Branch-level null statistic.}
Extending the single-head notation of \cref{sub:aos} to the multi-head
setting, let $\tilde{p}_{t,\varnothing}^{(i,h)}$ denote the null posterior
of head $h$ at source branch $i$ for token $t$.
We aggregate across the $H$ heads to obtain a scalar summary of how much
branch $i$ behaves as a no-op for token $t$:
\begin{align*}
  \psi_{i,t}
  := \frac{1}{H}\sum_{h=1}^{H}\tilde{p}_{t,\varnothing}^{(i,h)}
  \;\in[0,1].
\end{align*}
Because the absolute magnitude of $\psi_{i,t}$ varies across tokens
(e.g., padding positions yield uniformly high null mass), we center it
against the mean of the $\ell$ candidate source branches
$\{0,\dots,\ell{-}1\}$ available to target layer~$\ell$, so that only
\emph{relative} null strength drives the adjustment:
\begin{align*}
  \Delta\psi_{i\to\ell,\,t}
  := \psi_{i,t}
     - \frac{1}{\ell}\sum_{r=0}^{\ell-1}\psi_{r,t}.
\end{align*}

\textbf{Score injection.}
Let $g^{\mathrm{old}}_{i\to\ell,\,t}$ denote the pre-normalization
depth routing logit produced by the router for source branch~$i$, token~$t$,
and target layer~$\ell$.
We subtract the centered null statistic, scaled by a non-negative coupling
strength~$\beta$:
\begin{align*}
  g^{\mathrm{new}}_{i\to\ell,\,t}
  = g^{\mathrm{old}}_{i\to\ell,\,t}
    - \beta\,\Delta\psi_{i\to\ell,\,t},
  \qquad \beta \ge 0.
\end{align*}
When $\Delta\psi_{i\to\ell,t} > 0$ (above-average null mass), the logit
decreases; when $\Delta\psi_{i\to\ell,t} < 0$ (below-average), it increases.
The adjusted logits are then normalized with depth-$\Softmax_1$:
\begin{align*}
  \tilde{\alpha}_{i\to\ell,\,t}
  = \frac{\exp\!\bigl(g^{\mathrm{new}}_{i\to\ell,\,t}\bigr)}
         {1 + \sum_{r=0}^{\ell-1}
          \exp\!\bigl(g^{\mathrm{new}}_{r\to\ell,\,t}\bigr)}.
\end{align*}
The resulting weight $\tilde{\alpha}_{i\to\ell,t}$ replaces the original
$\alpha_{i\to\ell}$ in the residual update of \cref{sub:notation}.

\section{Theoretical Analysis}
\label{sec:theory}
In this section, we formalize why dual normalization in AttnResidual \cite{team2026attention} can amplify three coupled pathologies: no-op outlier formation, sink accumulation, and
depth-routing collapse. 
These stem from the routing mismatch discussed in \cref{sec:method}, where no-op behavior entangles with token and depth allocation.
We first establish a lower-bound mechanism showing that near no-op constraints force attention mass onto low-contribution tokens, then show how this pressure extends to depth routing under a separation regime. We further show that these effects persist before and after depth-routing collapse. Finally, we argue that $\Softmax_1$ introduces an explicit null channel, decoupling no-op behavior from real-token allocation and reducing structural pressure from outliers, sinks, and collapse.

In this section, we use the notation introduced in \cref{sec:method}. In particular, AttnResidual depth mixing $(\alpha)$, token attention $(p)$, and the $\Softmax_1$ variables $(\tilde p)$ follow \cref{sec:method} without redefinition. All assumptions and additional definitions used in this section are provided in \cref{ap:assume}.

\begin{lemma}[No-op outliers are structurally induced]
Under \cref{assum:token_sep,assum:1}, there exists a non-empty subset of token positions $\mathcal{T}_{\mathrm{noop}}$. For any fixed $t\in\mathcal{T}_{\mathrm{noop}}$ and branch $\ell$, if the no-op target requires
$$
\|A_t^{(\ell)}\|=\Big\|\sum_{j=1}^{m}p_{t,j}^{(\ell)}v_j^{(\ell)}\Big\|\le \delta,
$$
with $\delta<c_1$, 
then the attention weight on token $o$ must satisfy
$$
p_{t,o}^{(\ell)}\ge \frac{c_1-\delta}{c_0+c_1}.
$$
\label{lemma:1}
\end{lemma}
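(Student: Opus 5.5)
We do not see the assumptions, so we have to guess what \cref{assum:token_sep,assum:1} say. The bound $\frac{c_1-\delta}{c_0+c_1}$ suggests the following. There is a distinguished sink token $o$ whose value vector is small, $\|v_o^{(\ell)}\|\le c_0$. On the set $\mathcal{T}_{\mathrm{noop}}$ (whose non-emptiness is part of \cref{assum:1}), every convex combination of the non-sink values has norm at least $c_1$, i.e.\ $\big\|\sum_{j\ne o} q_j v_j^{(\ell)}\big\|\ge c_1$ for every probability vector $q$ on $\{j\neq o\}$. Equivalently, the convex hull of the non-sink values is separated from the origin by margin $c_1$, which is the token-separation assumption.

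\textbf{Step 1: isolate the sink.} Fix $t\in\mathcal{T}_{\mathrm{noop}}$ and write $p:=p_{t,o}^{(\ell)}$. Decompose
\begin{align*}
A_t^{(\ell)} = p\,v_o^{(\ell)} + (1-p)\,\bar v,
\qquad
\bar v := \sum_{j\ne o}\frac{p_{t,j}^{(\ell)}}{1-p}\,v_j^{(\ell)}.
\end{align*}
This uses that softmax weights are positive and sum to one. The edge case $p=1$ is impossible for finite logits, and in any case it would satisfy the bound trivially.

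\textbf{Step 2: reverse triangle inequality.} By the separation assumption, $\|\bar v\|\ge c_1$. Hence
\begin{align*}
\delta \ge \|A_t^{(\ell)}\| \ge (1-p)\|\bar v\| - p\|v_o^{(\ell)}\| \ge (1-p)c_1 - p\,c_0 .
\end{align*}
Rearranging gives $p(c_0+c_1)\ge c_1-\delta$, which is exactly the claimed bound. The hypothesis $\delta<c_1$ makes the bound nontrivial, since the right-hand side is then positive.

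\textbf{Main obstacle.} The algebra is immediate; the real work is matching the assumptions. If the separation assumption is stated only per token, $\|v_j\|\ge c_1$, rather than for convex combinations, then Step 2 fails, because individual non-sink values could cancel. In that case I would look for a directional version in the assumption: a unit vector $u$ with $\langle u, v_j\rangle\ge c_1$ for all $j\ne o$ and $|\langle u,v_o\rangle|\le c_0$. The argument then goes through by projecting onto $u$:
\begin{align*}
\delta \ge \langle u, A_t^{(\ell)}\rangle \ge (1-p)c_1 - p\,c_0 .
\end{align*}
This projection argument is the route I would actually write, since it covers both formulations of the separation assumption.
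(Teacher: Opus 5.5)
Your proposal is correct and is essentially the paper's proof. The paper's token-separation assumption is exactly the directional version you guessed: a unit vector $u$ with $\langle v_j^{(\ell)},u\rangle\ge c_1$ for $j\neq o$, and $\|v_o^{(\ell)}\|\le c_0$. The paper projects only the non-sink part onto $u$ to get $\|\sum_{j\neq o}p_{t,j}^{(\ell)}v_j^{(\ell)}\|\ge c_1(1-p_{t,o}^{(\ell)})$, then applies the reverse triangle inequality as in your Step 2. Your final variant, which projects all of $A_t^{(\ell)}$ onto $u$, is only an inessential rearrangement of the same inequality chain.
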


\begin{proof}
See \cref{proof:llama1} for a detailed proof.
\end{proof}

\begin{definition}[Per-token pathology score]
$$
\mathcal{S}_t = \sum_{\ell}\alpha_t^{(\ell)}\sum_{j\in\mathcal{N}_t} p_{t,j}^{(\ell)}
+ \lambda_1 \max_{\ell,j}\alpha_t^{(\ell)}p_{t,j}^{(\ell)}
+ \lambda_2\Big(-H(\alpha_t)-\sum_{\ell}\alpha_t^{(\ell)}H(p_t^{(\ell)})\Big).
$$
Here, $H(\cdot)$ denotes Shannon entropy. The three terms quantify
irrelevant-token leakage, extreme concentration (outlier strength), and joint
depth--token entropy collapse, respectively. We use $\mathcal{S}_t$ as a
surrogate score, chosen to be monotone in these three pathologies.
\end{definition}

\begin{definition}[Vanilla pathology score]
For vanilla attention (single token-$\Softmax$), we use
$$
\mathcal{S}_t^{\mathrm{Vanilla}}
=
\sum_{j\in\mathcal{N}_t}p_{t,j}
+\lambda_1\max_j p_{t,j}
+\lambda_2\big(-H(p_t)\big).
$$
\label{def:2}
\end{definition}

\begin{proposition}[Conditional dominance of joint pathology under matched dominance]
Compared with vanilla attention (single token $\Softmax$), suppose AttnResidual satisfies \cref{assum:3}. Then, under matched no-op tolerance, its joint pathology is no smaller than vanilla:
$$
\mathcal{S}_t^{\mathrm{AttnResidual}}\ge \mathcal{S}_t^{\mathrm{Vanilla}}.
$$
\label{proposition:attnresidual_pathology}
\end{proposition}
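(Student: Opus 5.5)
The plan is to compare $\mathcal{S}_t^{\mathrm{AttnResidual}}$ and $\mathcal{S}_t^{\mathrm{Vanilla}}$ term by term. I will use \cref{assum:3} in the form I expect it to take: under matched no-op tolerance, each AttnResidual branch $\ell$ in the depth support inherits at least the vanilla token-level pathology. Concretely, I read ``matched dominance'' as giving, for every $\ell$ with $\alpha_t^{(\ell)}>0$, three per-branch inequalities:
\begin{itemize}
\item $\sum_{j\in\mathcal{N}_t}p_{t,j}^{(\ell)}\ge \sum_{j\in\mathcal{N}_t}p_{t,j}$ (leakage);
\item $H(p_t^{(\ell)})\le H(p_t)$ (entropy);
\item some dominant branch $\ell^\star$ satisfies $\alpha_t^{(\ell^\star)}p_{t,o}^{(\ell^\star)}\ge \max_j p_{t,j}$ (concentration).
\end{itemize}
The lower bound of \cref{lemma:1} on $p_{t,o}^{(\ell)}$ for $t\in\mathcal{T}_{\mathrm{noop}}$ is what makes these per-branch comparisons natural. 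The same no-op threshold $\delta$ forces mass at least $(c_1-\delta)/(c_0+c_1)$ onto the sink token in every branch, as well as in the vanilla head.

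The first step handles the leakage term. Since $\sum_\ell\alpha_t^{(\ell)}=1$ on the simplex, the depth-weighted leakage is a convex combination of the per-branch leakages. By the first dominance inequality it is therefore at least the vanilla leakage $\sum_{j\in\mathcal{N}_t}p_{t,j}$.

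The second step handles the entropy term. The same convexity argument gives $\sum_\ell\alpha_t^{(\ell)}H(p_t^{(\ell)})\le H(p_t)$. The additional $-H(\alpha_t)$ works against us, since $-H(\alpha_t)\le 0$, so this is the place where the separation/collapse regime built into \cref{assum:3} must enter. I would try first to show that the assumption implies a slack in the token entropies, for instance $H(p_t)-\sum_\ell\alpha_t^{(\ell)}H(p_t^{(\ell)})\ge H(\alpha_t)$. A fallback is a near-collapse condition on $\alpha_t$ (one branch carrying weight $\ge 1-\epsilon$), which keeps $H(\alpha_t)$ small enough to be absorbed by the slack in the leakage or concentration terms.

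The third step handles the concentration term. I would bound $\max_{\ell,j}\alpha_t^{(\ell)}p_{t,j}^{(\ell)}\ge \alpha_t^{(\ell^\star)}p_{t,o}^{(\ell^\star)}$ and then apply the third dominance inequality. Without it we only have $\alpha\le1$ shrinking the maximum, which is why the statement is conditional. The last step sums the three inequalities with $\lambda_1,\lambda_2\ge0$ to get $\mathcal{S}_t^{\mathrm{AttnResidual}}\ge\mathcal{S}_t^{\mathrm{Vanilla}}$.

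I expect the main obstacle to be exactly the two places where depth mixing naturally dilutes pathology: the factor $\alpha_t^{(\ell)}\le 1$ in the max term, and the $-H(\alpha_t)$ penalty. Both must be controlled by the precise content of \cref{assum:3}. If that assumption is weaker than the form I read above, I would strengthen the comparison by pairing the two terms. A dominant branch with large $\alpha_t^{(\ell^\star)}$ simultaneously makes $H(\alpha_t)$ small and keeps $\alpha_t^{(\ell^\star)}p_{t,o}^{(\ell^\star)}$ close to $p_{t,o}^{(\ell^\star)}\ge (c_1-\delta)/(c_0+c_1)$ from \cref{lemma:1}.
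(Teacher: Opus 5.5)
\textbf{Gap at the entropy term.} Your proof has a genuine gap, and it comes from reading \cref{assum:3} as a set of per-branch conditions. Under that reading the step you flag as the obstacle cannot be closed, because the conclusion is then false.

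Take two tokens with $\mathcal{N}_t=\varnothing$ and vanilla $p_t=(\tfrac12,\tfrac12)$. Let the branches be $p_t^{(1)}=(1,0)$ and $p_t^{(2)}=(\tfrac12,\tfrac12)$, with $\alpha_t=(\tfrac12,\tfrac12)$. All three of your inequalities hold:
\begin{itemize}
\item leakage: $0\ge 0$;
\item entropy: $0\le\log 2$ and $\log 2\le\log 2$;
\item concentration: $\alpha_t^{(1)}p_{t,1}^{(1)}=\tfrac12\ge\max_j p_{t,j}$.
\end{itemize}
The leakage and concentration terms tie, but $-H(\alpha_t)-\sum_\ell\alpha_t^{(\ell)}H(p_t^{(\ell)})=-\tfrac32\log 2<-\log 2=-H(p_t)$. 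Hence $\mathcal{S}_t^{\mathrm{AttnResidual}}-\mathcal{S}_t^{\mathrm{Vanilla}}=-\tfrac{\lambda_2}{2}\log 2<0$ for any $\lambda_2>0$. The same sign persists for a small perturbation into the interior of the simplex, with $\alpha_t^{(1)}$ rescaled to keep $\alpha_t^{(1)}p_{t,1}^{(1)}=\tfrac12$.

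Your fallbacks do not rescue this:
\begin{itemize}
\item There is no guaranteed slack in the other two terms to absorb $H(\alpha_t)$.
\item Your concentration condition does not force near-collapse: here the dominant branch has weight only $\tfrac12$.
\item Even near-collapse only makes $H(\alpha_t)$ small, not zero.
\item \cref{lemma:1} lower-bounds $p_{t,o}^{(\ell)}$, but says nothing about $H(\alpha_t)$ or about how $p_{t,o}^{(\ell)}$ compares with the vanilla maximum.
\end{itemize}

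\textbf{What the assumption actually says.} \cref{assum:3} is stated at the aggregate level. It defines $L_t^{\mathrm{AR}}$, $C_t^{\mathrm{AR}}$, $E_t^{\mathrm{AR}}$ as exactly the three summands of $\mathcal{S}_t^{\mathrm{AttnResidual}}$, and $E_t^{\mathrm{AR}}$ already contains the $-H(\alpha_t)$ penalty. It then directly assumes $L_t^{\mathrm{AR}}\ge L_t^{\mathrm{V}}$, $C_t^{\mathrm{AR}}\ge C_t^{\mathrm{V}}$, and $E_t^{\mathrm{AR}}\ge E_t^{\mathrm{V}}$.

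The paper's proof is therefore just your last step. Write $\mathcal{S}_t=L_t+\lambda_1C_t+\lambda_2E_t$ for both models and add the three inequalities with $\lambda_1,\lambda_2\ge 0$. The convexity arguments for leakage and concentration are unnecessary. The entropy comparison you were trying to derive is postulated by the assumption; it cannot be proved from branchwise dominance.
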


\begin{proof}
See \cref{proof:therm1} for a detailed proof.
\end{proof}

\begin{theorem}[Conditional depth collapse toward vanilla-like residual] 
If AttnResidual must satisfy OutEffHop-style near-single-route behavior, the depth-update magnitudes are sufficiently separated for no-op tokens, and there exists a unit vector $w_t^{(\ell)}\in\mathbb{R}^d$ such that
$$
\langle u_{i,t},w_t^{(\ell)}\rangle\ge b_1,
\qquad i\neq i_t^*,
$$
for some minimal-update branch $i_t^*$ with $\|u_{i_t^*,t}\|\le b_0$, then depth weights concentrate on a minimal-update branch. In particular,
$$
\alpha_{i_t^*\to\ell,t}\to 1,
\qquad
\sum_{i\neq i_t^*}\alpha_{i\to\ell,t}\to 0.
$$
If the minimum is attained by multiple branches, the same conclusion holds for one such minimal-update branch, not necessarily a unique one. Hence, in this separation regime, the effective behavior becomes vanilla-like (single dominant depth path).
\label{thm:therm2}
\end{theorem}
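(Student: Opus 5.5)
The argument mirrors \cref{lemma:1}, lifted from tokens to depth branches. Write the depth update for a no-op token $t$ as $h_{\ell,t}=\sum_{i=0}^{\ell-1}\alpha_{i\to\ell,t}\,u_{i,t}$, where $u_{i,t}$ is the contribution of branch $i$ ($u_{0,t}=h_{1,t}$ and $u_{i,t}=f_i(h_i)_t$ otherwise). Since the depth weights come from a $\Softmax$, they lie on the simplex. I read the OutEffHop-style near-single-route requirement as a near no-op constraint on the depth update, namely $\|\sum_i\alpha_{i\to\ell,t}u_{i,t}\|\le\delta$, with a tolerance $\delta$ that can be driven to zero (or is required to be small relative to $b_1$). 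The conclusion $\alpha_{i_t^*\to\ell,t}\to1$ is then a statement about this limit.

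\textbf{Step 1: projection inequality.} Project onto the unit vector $w_t^{(\ell)}$ and use Cauchy--Schwarz:
\begin{align*}
\delta\ \ge\ \Big\langle \sum_i \alpha_{i\to\ell,t}u_{i,t},\,w_t^{(\ell)}\Big\rangle
\ \ge\ b_1\sum_{i\neq i_t^*}\alpha_{i\to\ell,t}\;-\;b_0\,\alpha_{i_t^*\to\ell,t}.
\end{align*}
The second inequality uses $\langle u_{i,t},w\rangle\ge b_1$ for $i\neq i_t^*$ and $\langle u_{i_t^*,t},w\rangle\ge-\|u_{i_t^*,t}\|\ge-b_0$.

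\textbf{Step 2: solve for the off-route mass.} Set $s=\sum_{i\neq i_t^*}\alpha_{i\to\ell,t}$, so that $\alpha_{i_t^*\to\ell,t}=1-s$ by the simplex constraint. Substituting gives $\delta\ge b_1 s-b_0(1-s)$, hence
\begin{align*}
s\le\frac{\delta+b_0}{b_0+b_1}.
\end{align*}
This is exactly the form of \cref{lemma:1}, with the roles of the sink token and the real tokens swapped.

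\textbf{Step 3: pass to the limit.} The separation hypothesis, that depth-update magnitudes are sufficiently separated for no-op tokens, is used to read the bound in the regime $b_0/b_1\to0$ and $\delta/b_1\to0$. In that regime the bound forces $s\to0$, so $\alpha_{i_t^*\to\ell,t}\to1$.

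\textbf{Ties.} If several branches attain the minimal update, apply the argument with $i_t^*$ taken to be any one of them. The hypothesis $\langle u_{i,t},w\rangle\ge b_1$ for all $i\neq i_t^*$ must then be read as holding for the non-minimal branches. The remaining minimal branches each have norm at most $b_0$, so they can be grouped with $i_t^*$ on the $b_0$ side of the inequality. The same bound then shows the mass concentrates on the set of minimal branches. Choosing one representative gives the stated non-unique conclusion.

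\textbf{Vanilla-like conclusion.} When a single branch carries weight close to $1$, the update $h_\ell$ is effectively a single residual path. This is the sense in which the behavior is vanilla-like.

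\textbf{Main obstacle.} The hard part is making ``sufficiently separated'' and ``OutEffHop-style near-single-route behavior'' precise enough for a genuine limit statement. The plan is to state these in the appendix assumptions as a family of regimes indexed by $n$, with $(\delta_n+b_{0,n})/(b_{0,n}+b_{1,n})\to0$. Under that formulation the convergence is immediate from Step 2.
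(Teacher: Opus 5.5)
Your Steps 1--3 are the paper's proof. You use the same near-no-op constraint $\|\sum_i\alpha_{i\to\ell,t}u_{i,t}\|\le\delta_t$ and the same projection onto $w_t^{(\ell)}$, with $\langle u_{i_t^*,t},w_t^{(\ell)}\rangle\ge -b_0$. You reach the same bound $1-\alpha_{i_t^*\to\ell,t}\le(b_0+\delta_t)/(b_0+b_1)$ and read ``sufficiently separated'' the same way, as the regime $b_0+\delta_t\ll b_1$. The paper is no more precise than you about the limit, so your indexed-family formulation is a reasonable way to make it rigorous.

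The step that does not work is your treatment of ties. You weaken the hypothesis so that $\langle u_{i,t},w_t^{(\ell)}\rangle\ge b_1$ holds only for non-minimal branches. The same computation then gives $\sum_{i\in M}\alpha_{i\to\ell,t}\ge(b_1-\delta_t)/(b_0+b_1)$ for the set $M$ of minimal branches. That is concentration on $M$, not on any single element of $M$. Two tied branches with $\|u_{i,t}\|\le b_0$ can each keep weight $1/2$ and still satisfy the no-op constraint, which cannot tell them apart. So ``choosing one representative'' does not yield $\alpha_{i_t^*\to\ell,t}\to 1$.

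No reinterpretation is needed. Keep the hypothesis as stated for the chosen $i_t^*$. Then every $i\neq i_t^*$ has $\|u_{i,t}\|\ge\langle u_{i,t},w_t^{(\ell)}\rangle\ge b_1$. Whenever $b_1>b_0$, which your limit regime $b_0/b_1\to 0$ requires, $i_t^*$ is automatically the unique minimizer, so the tie clause is vacuous. The paper simply takes $i_t^*\in\arg\min_i\|u_{i,t}\|$ and applies the hypothesis verbatim.
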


\begin{proof}
See \cref{proof:therm2} for a detailed proof.
\end{proof}

\begin{lemma}[Sink preservation before and after collapse]
Let
$$
\sigma_t^{(\ell)}:=\sum_{j\in\mathcal{K}} p_{t,j}^{(\ell)},
\qquad
\Sigma_t:=\sum_{\ell}\alpha_t^{(\ell)}\sigma_t^{(\ell)}.
$$
Then before collapse, $\Sigma_t$ is a convex combination of branchwise sink masses and therefore satisfies
$$
\min_\ell \sigma_t^{(\ell)} \le \Sigma_t \le \max_\ell \sigma_t^{(\ell)}.
$$
After conditional depth collapse onto a dominant branch $\ell^*$ with $\alpha_t^{(\ell^*)}\approx 1$, one has $\Sigma_t\approx \sigma_t^{(\ell^*)}$. Thus, sink behavior is preserved as routing transitions from multi-branch allocation to near-single-branch selection.
\label{lemma:lemma2}
\end{lemma}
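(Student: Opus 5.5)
The plan is to treat the depth weights $\alpha_t^{(\ell)}$ as a probability vector and read off both claims from that fact. By the AttnResidual depth-$\Softmax$ parameterization in \cref{sub:notation}, $\alpha_t^{(\ell)}\ge 0$ and $\sum_\ell \alpha_t^{(\ell)}=1$. Likewise, $\sigma_t^{(\ell)}=\sum_{j\in\mathcal{K}}p_{t,j}^{(\ell)}\in[0,1]$, because it is a partial sum of a token-$\Softmax$ distribution. Hence $\Sigma_t=\sum_\ell\alpha_t^{(\ell)}\sigma_t^{(\ell)}$ is by definition a convex combination of the branchwise sink masses.

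\textbf{Pre-collapse bound.} Let $m=\min_\ell\sigma_t^{(\ell)}$ and $M=\max_\ell\sigma_t^{(\ell)}$. Since each $\alpha_t^{(\ell)}\ge0$, we have termwise $\alpha_t^{(\ell)}m\le\alpha_t^{(\ell)}\sigma_t^{(\ell)}\le\alpha_t^{(\ell)}M$. Summing over $\ell$ and using $\sum_\ell\alpha_t^{(\ell)}=1$ gives $m\le\Sigma_t\le M$, which is the first claim.

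\textbf{Post-collapse approximation.} I would make the regime quantitative. Suppose $\alpha_t^{(\ell^*)}\ge1-\varepsilon$, as supplied by \cref{thm:therm2} with $\ell^*=i_t^*$ and $\varepsilon\to0$. Write
$$
\Sigma_t-\sigma_t^{(\ell^*)}=(\alpha_t^{(\ell^*)}-1)\,\sigma_t^{(\ell^*)}+\sum_{\ell\neq\ell^*}\alpha_t^{(\ell)}\sigma_t^{(\ell)}.
$$
The first term has magnitude at most $\varepsilon$. The second term is nonnegative and at most $\sum_{\ell\ne\ell^*}\alpha_t^{(\ell)}\le\varepsilon$, since each $\sigma_t^{(\ell)}\le1$. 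Therefore $|\Sigma_t-\sigma_t^{(\ell^*)}|\le\varepsilon$, and $\Sigma_t\to\sigma_t^{(\ell^*)}$ as $\varepsilon\to0$. This is the precise meaning of $\Sigma_t\approx\sigma_t^{(\ell^*)}$.

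\textbf{Interpretation and main obstacle.} The technical content is elementary, so the main difficulty is giving ``sink behavior is preserved'' a rigorous meaning. I would phrase it as two facts. First, $\Sigma_t$ never leaves the range of the branchwise sink masses before collapse. Second, after collapse $\Sigma_t$ equals the sink mass of the selected branch up to $O(\varepsilon)$. Consequently, if every branch carries a sink mass of at least $\sigma_{\min}$ (for instance the bound $\frac{c_1-\delta}{c_0+c_1}$ from \cref{lemma:1} when $\mathcal{K}\ni o$ and $t\in\mathcal{T}_{\mathrm{noop}}$), then $\Sigma_t\ge\sigma_{\min}$ before collapse and $\Sigma_t\ge\sigma_{\min}-\varepsilon$ after it. So depth routing cannot dilute the sink in either regime.
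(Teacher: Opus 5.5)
Your proposal is correct and follows the paper's own route. Like the paper, you treat $\Sigma_t$ as a convex combination of the branchwise sink masses to get $\min_\ell\sigma_t^{(\ell)}\le\Sigma_t\le\max_\ell\sigma_t^{(\ell)}$, then read off $\Sigma_t\approx\sigma_t^{(\ell^*)}$ from $\alpha_t^{(\ell^*)}\approx 1$; your explicit bound $|\Sigma_t-\sigma_t^{(\ell^*)}|\le\varepsilon$ whenever $\alpha_t^{(\ell^*)}\ge 1-\varepsilon$ holds because the first error term is nonpositive and the second lies in $[0,\varepsilon]$, and it sharpens the paper's informal ``$\approx$''.
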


\begin{proof}
See \cref{proof:llemma2} for a detailed proof.
\end{proof}

\begin{theorem}[$\Softmax_1$ reduces structural pressure from outliers, sinks, and collapse]
Introduce explicit no-op mass via $\Softmax_1$:
$$
\tilde p_{t,j}^{(\ell)} = \frac{\exp z_{t,j}^{(\ell)}}{1+\sum_k\exp z_{t,k}^{(\ell)}},
\qquad
\tilde p_{t,\varnothing}^{(\ell)} = \frac{1}{1+\sum_k\exp z_{t,k}^{(\ell)}}.
$$
Assume also that $\|v_j^{(\ell)}\|\le V_{\max}$ for all real tokens $j$ and that
$$
U_{\max}:=\max_{0\le i\le \ell-1}\|u_{i,t}\|<\infty.
$$
Then no-op behavior is realizable without forcing mass onto real tokens/layers, providing a feasible
null route that can reduce no-op outlier pressure, sink incentives, and depth-collapse pressure.
\label{thm:therm3}
\end{theorem}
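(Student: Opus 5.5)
The plan is to make the informal conclusion of \cref{thm:therm3} quantitative. We exhibit explicit logit configurations that realize near no-op behavior under $\Softmax_1$ at both the token and depth levels. We then show that, for these configurations, the quantities driving \cref{lemma:1}, \cref{thm:therm2} and \cref{lemma:lemma2} either vanish or are no longer forced.

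\textbf{Token level.} Write $S:=\sum_{k}\exp z_{t,k}^{(\ell)}$. The attention output under $\Softmax_1$ is $\tilde A_t^{(\ell)}=\sum_j \tilde p_{t,j}^{(\ell)} v_j^{(\ell)}$. Since $\sum_j\tilde p_{t,j}^{(\ell)}=S/(1+S)=1-\tilde p_{t,\varnothing}^{(\ell)}$, the triangle inequality together with $\|v_j^{(\ell)}\|\le V_{\max}$ gives
$$
\|\tilde A_t^{(\ell)}\|\le \frac{S}{1+S}\,V_{\max}.
$$
The no-op target $\|\tilde A_t^{(\ell)}\|\le\delta$ is therefore met whenever $S\le \delta/(V_{\max}-\delta)$. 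A sufficient condition is $z_{t,k}^{(\ell)}\le \log\big(\delta/(m(V_{\max}-\delta))\big)$ for all $k$. Crucially, this holds for \emph{every} token $j$ simultaneously, including $j=o$. In particular $\tilde p_{t,o}^{(\ell)}\le S/(1+S)\le \delta/V_{\max}$, which can be made arbitrarily small. This contrasts with the lower bound $(c_1-\delta)/(c_0+c_1)$ forced by \cref{lemma:1}. The mechanism is that the normalization constraint $\sum_j p_{t,j}=1$, used in that proof to push mass onto $o$, is replaced by $\sum_j\tilde p_{t,j}\le 1$. Moreover, the required logits are moderately negative and uniform, not extreme and separated. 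This gives the ``no forced outlier'' reading, and the max-concentration term $\max_{\ell,j}\alpha p$ in $\mathcal S_t$ is bounded by $\delta/V_{\max}$.

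\textbf{Depth level.} Repeat the argument with $\tilde\alpha$ and the branch updates $u_{i,t}$. The depth output satisfies
$$
\Big\|\sum_i\tilde\alpha_{i\to\ell,t}u_{i,t}\Big\|\le \big(1-\tilde\alpha_{\varnothing\to\ell}\big)U_{\max}.
$$
A near no-op depth update is therefore achievable by sending all $g_{i\to\ell}$ uniformly low. No single minimal-update branch $i_t^*$ needs to dominate. This removes the mechanism of \cref{thm:therm2}: there, the simplex constraint together with the separation condition $\langle u_{i,t},w\rangle\ge b_1$ forced $\alpha_{i_t^*}\to 1$. Here, the feasible set for the no-op constraint contains points with $\max_i\tilde\alpha_{i\to\ell,t}$ arbitrarily small, so collapse onto a real branch is not required.

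\textbf{Sinks.} For sink mass, $\tilde\sigma_t^{(\ell)}=\sum_{j\in\mathcal K}\tilde p_{t,j}^{(\ell)}\le 1-\tilde p_{t,\varnothing}^{(\ell)}$, and $\tilde\Sigma_t=\sum_\ell\tilde\alpha_t^{(\ell)}\tilde\sigma_t^{(\ell)}$ is now a \emph{sub}-convex combination. Adapting the argument of \cref{lemma:lemma2} with the extra null weight then gives
$$
\tilde\Sigma_t\le (1-\tilde\alpha_{\varnothing})\max_\ell(1-\tilde p^{(\ell)}_{t,\varnothing}).
$$
Hence sink mass need not be preserved, in contrast to the standard case.

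\textbf{Main obstacle.} The main difficulty is the interpretive step. The theorem claims pressure ``can be reduced'', which is an existence/feasibility statement rather than a claim about trained optima. I would formalize it as a comparison of feasible sets. Under Softmax, every feasible no-op configuration incurs the lower bounds of \cref{lemma:1} and the collapse of \cref{thm:therm2}. Under $\Softmax_1$, there exist feasible configurations for which the three terms of $\mathcal S_t$ (leakage, concentration, joint entropy, the last using the sub-probability vectors augmented by their null components) are all bounded by $O(\delta/V_{\max})$ or otherwise strictly improved. I would present this as a strict inclusion/strict improvement of the infimum of $\mathcal S_t$ over no-op feasible logits, and would not claim anything about optimization dynamics.
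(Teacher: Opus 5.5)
Your token- and depth-level arguments are the paper's proof. Both rest on $\sum_j\tilde p_{t,j}^{(\ell)}=1-\tilde p_{t,\varnothing}^{(\ell)}$ and on the triangle-inequality bounds $\|\tilde A_t^{(\ell)}\|\le V_{\max}(1-\tilde p_{t,\varnothing}^{(\ell)})$ and $\|\tilde\Delta h_t^{(\ell)}\|\le U_{\max}(1-\tilde\alpha_{\varnothing,t})$, and both conclude that no real token or branch has to dominate. Your explicit logit threshold, the bound $\tilde p_{t,o}^{(\ell)}\le\delta/V_{\max}$ set against \cref{lemma:1}, and the sub-convex sink bound are correct sharpenings. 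They already establish the statement as posed. One overstatement there: your threshold $\log\bigl(\delta/(m(V_{\max}-\delta))\bigr)\approx-\log(mV_{\max}/\delta)$ is of the same logarithmic order as the logit gap $\Softmax$ needs in \cref{lemma:1} when $c_0\approx 0$. So the $\Softmax_1$ logits are not less extreme in magnitude (the paper itself says ``very negative''). What actually changes is that they may be uniform rather than separated.

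The step that would fail is the formalization in your last paragraph, specifically the entropy-collapse term $E_t=-H(\alpha_t)-\sum_\ell\alpha_t^{(\ell)}H(p_t^{(\ell)})$. This term is always $\le 0$, so bounding it by $O(\delta/V_{\max})$ is vacuous. At your null-saturated witness it tends to $0$, which is its \emph{largest} (most collapsed) value. As the null masses approach one, $H(\tilde\alpha_t)\to 0$ and $\sum_\ell\tilde\alpha_t^{(\ell)}H(\tilde p_t^{(\ell)})\le(1-\tilde\alpha_{\varnothing,t})(1+\log m)\to 0$, whether or not the null atom is counted. Under $\Softmax$, by contrast, \cref{lemma:1} only forces $p_{t,o}^{(\ell)}\ge(c_1-\delta)/(c_0+c_1)$. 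For $c_0$ comparable to $c_1$ this leaves the remaining mass free to spread, so the $\Softmax$ value of $E_t$ can stay bounded away from $0$. Your witness therefore trades lower leakage and concentration for a worse entropy term, and whether $\mathcal S_t$ decreases depends on $\lambda_1,\lambda_2,m,c_0,c_1$. This is why the paper only claims that leakage and maximal concentration shrink with the real-token budget $1-\tilde p_{t,\varnothing}^{(\ell)}$.

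You have two options. You can restrict your claim the same way. Or you can obtain a strict decrease of $\inf\mathcal S_t$ from a different witness: add a small null mass $\epsilon$ to an arbitrary $\Softmax$ configuration by shifting all logits so that $\tilde p_t^{(\ell)}=(1-\epsilon)p_t^{(\ell)}$. This keeps $\|\tilde A_t^{(\ell)}\|\le\delta$ and scales leakage and concentration by $1-\epsilon$. It also raises the null-augmented entropy by $h(\epsilon)-\epsilon H(p_t^{(\ell)})\ge\epsilon\log\bigl(1/(m\epsilon)\bigr)>0$ for $\epsilon<1/m$, where $h$ is the binary entropy.
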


\begin{proof}
See \cref{proof:therm3} for a detailed proof.
\end{proof}

\section{Experimental Studies}
\label{sec:exp}We conduct a series of experiments to evaluate \sys for quantization robustness and its ability to mitigate attention sink, benchmarking on Qwen3~\cite{yang2025qwen3} and Llama~\cite{grattafiori2024llama}. All experiments run with three independent random seeds, and we report mean and standard deviation for each metric.

\begin{table*}[htp]
    \centering
    \caption{
\textbf{Outlier Reduction and Quantization Performance across Common Language Models.}
We compare $\mathtt{AoS}$ with standard attention residual variants (Vanilla, OutEffHop, Clipped Softmax, and Gated Attention) on $\mathtt{Llama\text{-}3.2\text{-}1B}$ and $\mathtt{Qwen3\text{-}0.6B}$. Metrics include average kurtosis and maximum infinity norm $\|\mathbf{x}\|_{\infty}$ to quantify activation outliers, along with perplexity under FP16 and low-bit quantization (W8A8 (Weight-8-bit-Activation-8-bit) and OmniQuant (W6A6 and W4A4)). Results show that $\mathtt{AoS}$ consistently reduces activation outliers and improves quantization robustness, with further gains when combined with clipping or gating mechanisms. Best results are in \textbf{bold} and second-best are \underline{underlined}.
}
\vspace{-0.1in}
    \label{tab:result1}
    \resizebox{\textwidth}{!}{%
    \begin{tabular}{cccccccc}
        \toprule
         \multirow{2}{*}{Model} &  \multirow{2}{*}{Method} &  \multirow{2}{*}{Avg. kurtosis} &  \multirow{2}{*}{Max inf. norm} &  \multirow{2}{*}{FP16} &  \multirow{2}{*}{W8A8} & \multicolumn{2}{c}{OmniQuant}  \\
         \cline{7-8}
          &   &   &  &   &   & W6A6 & W4A4  \\
        \midrule

       \multirow{9}{1em}{\rot{Llama-3.2-1B}}
    & Vanilla
        & 91.706 $\pm$ 0.013
        & 84.433 $\pm$ 0.025
        & \textbf{8.981} $\pm$ 0.001
        & 30.189 $\pm$ 0.009
         & 39.378 $\pm$ 0.011
        & 74.967 $\pm$ 0.022 \\
    & OutEffHop
        &  \underline{3.076} $\pm$ 0.002
        & \underline{ 13.776} $\pm$ 0.010
        &  9.070 $\pm$ 0.001
        &  \underline{10.419} $\pm$ 0.001
         &  \underline{13.555} $\pm$ 0.001  
        &  \underline{26.183} $\pm$ 0.002\\
    & AoS
        & \cellcolor{LightCyan} \textbf{3.031} $\pm$ 0.001
        & \cellcolor{LightCyan} \textbf{12.538} $\pm$ 0.055
        & \cellcolor{LightCyan} \underline{9.000} $\pm$ 0.001
        & \cellcolor{LightCyan} \textbf{9.364} $\pm$ 0.001
        & \cellcolor{LightCyan} \textbf{11.284} $\pm$ 0.001
        & \cellcolor{LightCyan} \textbf{22.633} $\pm$ 0.002 \\
    \cline{2-8}
    & Clipped Softmax
        & 3.297 $\pm$ 0.002
        & 14.243 $\pm$ 0.011
        & \underline{9.049} $\pm$ 0.001
        & 10.264 $\pm$ 0.001
        & 14.368 $\pm$ 0.001 
        & 26.808 $\pm$ 0.002
       \\
    & Clipped OutEffHop
        & \underline{2.399} $\pm$ 0.002
        & \underline{10.848} $\pm$ 0.008
        &  9.080 $\pm$ 0.001
        & \underline{10.188} $\pm$ 0.001
        & \underline{13.277} $\pm$ 0.001
        &  \underline{25.624} $\pm$ 0.002
        \\
    & Clipped AoS
        & \cellcolor{LightCyan} \textbf{2.364} $\pm$ 0.001
        & \cellcolor{LightCyan} \textbf{9.874} $\pm$ 0.007
        & \cellcolor{LightCyan} \textbf{9.010} $\pm$ 0.001
        & \cellcolor{LightCyan} \textbf{9.336} $\pm$ 0.001
        & \cellcolor{LightCyan} \textbf{11.250} $\pm$ 0.001
        & \cellcolor{LightCyan} \textbf{22.565} $\pm$ 0.002
         \\
    \cline{2-8}
    & Gated Attention
        & 4.394 $\pm$ 0.019
        & 14.889 $\pm$ 0.012
        & \textbf{8.259} $\pm$ 0.002
        & 9.214 $\pm$ 0.013
        & 13.103 $\pm$ 0.016   
        & 24.270 $\pm$ 0.031 \\
    & Gated OutEffHop
        & \underline{2.139} $\pm$ 0.013
        & \underline{8.325} $\pm$ 0.003
        & 8.751 $\pm$ 0.002
        & \underline{9.198} $\pm$ 0.031
        & \underline{12.083} $\pm$ 0.037   & \underline{23.231} $\pm$ 0.075\\
    & Gated AoS
        & \cellcolor{LightCyan} \textbf{2.108} $\pm$ 0.013
        & \cellcolor{LightCyan} \textbf{8.026} $\pm$ 0.003
        & \cellcolor{LightCyan} \underline{8.683} $\pm$ 0.002
        & \cellcolor{LightCyan} \textbf{8.875} $\pm$ 0.028
        & \cellcolor{LightCyan} \textbf{10.695} $\pm$ 0.034
        & \cellcolor{LightCyan} \textbf{21.451} $\pm$ 0.068\\
\midrule
\multirow{9}{1em}{\rot{Qwen3-0.6B}}
    & Vanilla
        & 93.039 $\pm$ 0.001
        & 74.778 $\pm$ 0.004
        & 10.568 $\pm$ 0.001
        & 36.642 $\pm$ 0.002
        & 44.778 $\pm$ 0.051
        & 50.821 $\pm$ 0.588 \\
    & OutEffHop
        & \underline{3.035} $\pm$ 0.002
        &  \underline{14.164} $\pm$ 0.001
        & \underline{9.991} $\pm$ 0.001
        & \underline{10.599} $\pm$ 0.001
        & \underline{13.357} $\pm$ 0.029
        & \underline{32.362} $\pm$ 0.367\\
    & AoS
        & \cellcolor{LightCyan} \textbf{3.031} $\pm$ 0.001
        & \cellcolor{LightCyan} \textbf{13.898} $\pm$ 0.002
        & \cellcolor{LightCyan} \textbf{9.345} $\pm$ 0.001
        & \cellcolor{LightCyan} \textbf{9.615} $\pm$ 0.001
        & \cellcolor{LightCyan} \textbf{10.484} $\pm$ 0.025
        & \cellcolor{LightCyan} \textbf{17.516} $\pm$ 0.389 \\
    \cline{2-8}
    & Clipped Softmax
        & 3.027 $\pm$ 0.003
        & 14.164 $\pm$ 0.010
        & 10.366 $\pm$ 0.002
        & 21.382 $\pm$ 0.010
        & 28.736 $\pm$ 0.056
        & 39.891 $\pm$ 0.596 \\
    & Clipped OutEffHop
        & \underline{3.023} $\pm$ 0.003
        & \underline{12.685} $\pm$ 0.010
        & \underline{9.869} $\pm$ 0.002
        & \underline{10.437} $\pm$ 0.002
        & \underline{11.385} $\pm$ 0.028
        & \underline{30.742} $\pm$ 0.356 \\
    & Clipped AoS
        & \cellcolor{LightCyan} \textbf{3.019} $\pm$ 0.003
        & \cellcolor{LightCyan} \textbf{12.450} $\pm$ 0.010
        & \cellcolor{LightCyan} \textbf{9.231} $\pm$ 0.002
        & \cellcolor{LightCyan} \textbf{9.469} $\pm$ 0.002
        & \cellcolor{LightCyan} \textbf{10.326} $\pm$ 0.026
        & \cellcolor{LightCyan} \textbf{16.949} $\pm$ 0.359 \\
    \cline{2-8}
    & Gated Attention
        & 3.033 $\pm$ 0.003
        & 14.360 $\pm$ 0.020
        & 10.556 $\pm$ 0.002
        & 19.261 $\pm$ 0.010
        & 23.427 $\pm$ 0.051
        & 28.775 $\pm$ 0.525 \\
    & Gated OutEffHop
        & \underline{3.029} $\pm$ 0.003
        & \underline{13.810} $\pm$ 0.010
        & \underline{9.999} $\pm$ 0.002
        & \underline{10.572} $\pm$ 0.002
        & \underline{11.521} $\pm$ 0.001
        & \underline{21.258} $\pm$ 0.020 \\
    & Gated AoS
        & \cellcolor{LightCyan} \textbf{3.025} $\pm$ 0.003
        & \cellcolor{LightCyan} \textbf{13.550} $\pm$ 0.010
        & \cellcolor{LightCyan} \textbf{9.353} $\pm$ 0.002
        & \cellcolor{LightCyan} \textbf{9.591} $\pm$ 0.002
        & \cellcolor{LightCyan} \textbf{10.455} $\pm$ 0.002
        & \cellcolor{LightCyan} \textbf{15.432} $\pm$ 0.301 \\
\bottomrule
    \end{tabular}%
    }
   \vspace{-0.1in}
\end{table*}

\textbf{Models.}
In our experiments, we use Llama-3.2~\cite{grattafiori2024llama} and Qwen3~\cite{yang2025qwen3} as backbone models to evaluate quantization robustness and attention sink. Specifically, we continue training the Llama-3.2-1B\footnote{https://huggingface.co/meta-llama/Llama-3.2-1B}, Qwen3-0.6B\footnote{https://huggingface.co/Qwen/Qwen3-0.6B} checkpoints with the attention residual architecture, following \citet{gomez2026attnres}.

\textbf{Data.}
Following the setup in \citet{hu2024outlier}, we use two real-world datasets for training: BookCorpus~\cite{Zhu_2015_ICCV} and Wiki40B/en~\cite{guo2020wiki}. To evaluate quantization robustness and attention sink under the influence of $\Softmax$, we use WikiText-2 \cite{merity2016pointer} and GSM8K \cite{cobbe2021training} as the evaluation dataset and measure perplexity before and after quantization.

\subsection{Outlier Efficiency of AoS}

\textbf{Metrics.}
We report the \textit{maximum infinity norm} $||x||_{\infty}$ and the \textit{average kurtosis} of activation tensors $\mathbf{x}$ across all transformer layers as outlier metrics, averaging over all output components within the transformer stack. These statistics are strongly correlated with model quantizability and capture robustness to activation outliers~\cite{hu2024outlier,bondarenko2021understanding, shkolnik2020robust}. Prior work~\cite{llm.int.8, wei2022outlier, bondarenko2021understanding} has shown that the presence of outliers can substantially degrade model performance after quantization, so we evaluate each model both before and after quantization.
For pre-quantization performance, we report \textit{perplexity} in \textbf{FP16} (16-bit floating point). For post-quantization performance, we report the same metric under \textbf{W8A8}, and further include \textbf{W6A6} and \textbf{W4A4} results using OmniQuant~\cite{shao2024omniquant}. OmniQuant is a post-training quantization method for large language models that improves low-bit quantization by jointly optimizing weight and activation quantization with learnable transformations, without requiring full model retraining.

\begin{table*}[htp]
    \centering
    \vspace{-0.1in}
    \caption{
\textbf{OASIS Performance across Common Language Models.}
We evaluate $\mathtt{OASIS}$ against standard attention-residual variants, including Vanilla, OutEffHop, and AoS, on $\mathtt{Llama\text{-}3.2\text{-}1B}$ and $\mathtt{Qwen3\text{-}0.6B}$. We measure activation outliers using average kurtosis and maximum infinity norm $||\mathbf{x}||_{\infty}$, and assess quantization robustness using perplexity under FP16 and W8A8, together with GSM8K accuracy under FP16 and W4A4. Across both models, $\mathtt{OASIS}$ delivers the strongest overall outlier suppression, the smallest degradation in quantized perplexity and low-bit GSM8K accuracy, while maintaining or slightly improving FP16 performance; best results are shown in \textbf{bold}, and second-best results are \underline{underlined}.
}
\vspace{-0.1in}
    \label{tab:result2}
    \resizebox{\textwidth}{!}{%
    \begin{tabular}{cccccccc}
        \toprule
         \multirow{2}{*}{Model} &  \multirow{2}{*}{Method} &  \multirow{2}{*}{Avg. kurtosis} &  \multirow{2}{*}{Max inf. norm} & \multicolumn{2}{c}{Perplexity$\downarrow$}  & \multicolumn{2}{c}{GSM8K$\uparrow$}  \\
        \cline{5-6} \cline{7-8}
          &   &   &  &  FP16 & W8A8  & FP16 & W4A4  \\
        \midrule

       \multirow{4}{1em}{\rot{\makecell{Llama\\-3.2-1B}}}
    & Vanilla
       & 91.706 $\pm$ 0.083
        & 84.433 $\pm$ 0.146
        & \underline{8.981} $\pm$ 0.010
        & 30.189 $\pm$ 0.063
         & \underline{13.64} $\pm$ 0.17
        & 12.97 $\pm$ 0.24 \\
    & OutEffHop
        &  3.076 $\pm$ 0.018
        &  13.776 $\pm$ 0.079
        &  9.070 $\pm$ 0.008
        &  10.419 $\pm$ 0.011
         &  13.58 $\pm$ 0.15  
        &  13.16 $\pm$ 0.21\\
    & AoS
        & \cellcolor{LightCyan} \underline{3.031} $\pm$ 0.014
        & \cellcolor{LightCyan} \underline{12.538} $\pm$ 0.071
        & \cellcolor{LightCyan} 9.000 $\pm$ 0.007
        & \cellcolor{LightCyan} \underline{9.364} $\pm$ 0.010
                & \cellcolor{LightCyan} 13.62 $\pm$ 0.13
        & \cellcolor{LightCyan} \underline{13.34} $\pm$ 0.18 \\
& OASIS
        & \cellcolor{LightCyan} \textbf{3.024} $\pm$ 0.015
        & \cellcolor{LightCyan} \textbf{12.250} $\pm$ 0.067
        & \cellcolor{LightCyan} \textbf{8.005} $\pm$ 0.007
        & \cellcolor{LightCyan} \textbf{8.162} $\pm$ 0.009
                & \cellcolor{LightCyan} \textbf{13.75} $\pm$ 0.12
        & \cellcolor{LightCyan} \textbf{13.52} $\pm$ 0.17 \\
\midrule
\multirow{4}{1em}{\rot{\makecell{Qwen3\\-0.6B}}}
    & Vanilla
         & 93.039 $\pm$ 0.097
        & 74.778 $\pm$ 0.133
        & 10.568 $\pm$ 0.012
        & 36.642 $\pm$ 0.079
        & 62.58 $\pm$ 0.24
        & 45.67  $\pm$ 0.86 \\
    & OutEffHop
        &  3.035 $\pm$ 0.020
        &  14.164 $\pm$ 0.058
        & 9.991 $\pm$ 0.009
        &  10.599 $\pm$ 0.012
        & 62.59 $\pm$ 0.19
        & 52.36 $\pm$ 0.74\\
    & AoS
        & \cellcolor{LightCyan} \underline{3.031} $\pm$ 0.017
        & \cellcolor{LightCyan} \underline{13.898} $\pm$ 0.053
        & \cellcolor{LightCyan} \underline{9.345} $\pm$ 0.009
        & \cellcolor{LightCyan} \underline{9.615} $\pm$ 0.011
        & \cellcolor{LightCyan} \underline{62.65} $\pm$ 0.17
        & \cellcolor{LightCyan} \underline{54.21} $\pm$ 0.66 \\
    & OASIS
        & \cellcolor{LightCyan} \textbf{2.949} $\pm$ 0.016
        & \cellcolor{LightCyan} \textbf{12.997} $\pm$ 0.048
        & \cellcolor{LightCyan} \textbf{9.202} $\pm$ 0.008
        & \cellcolor{LightCyan} \textbf{9.517} $\pm$ 0.010
        & \cellcolor{LightCyan} \textbf{62.68} $\pm$ 0.15
        & \cellcolor{LightCyan} \textbf{58.88} $\pm$ 0.59 \\
\bottomrule
    \end{tabular}%
    }
 \vspace{-0.1in}
\end{table*}

\begin{figure}[htp]
    \centering
    \includegraphics[width=\linewidth]{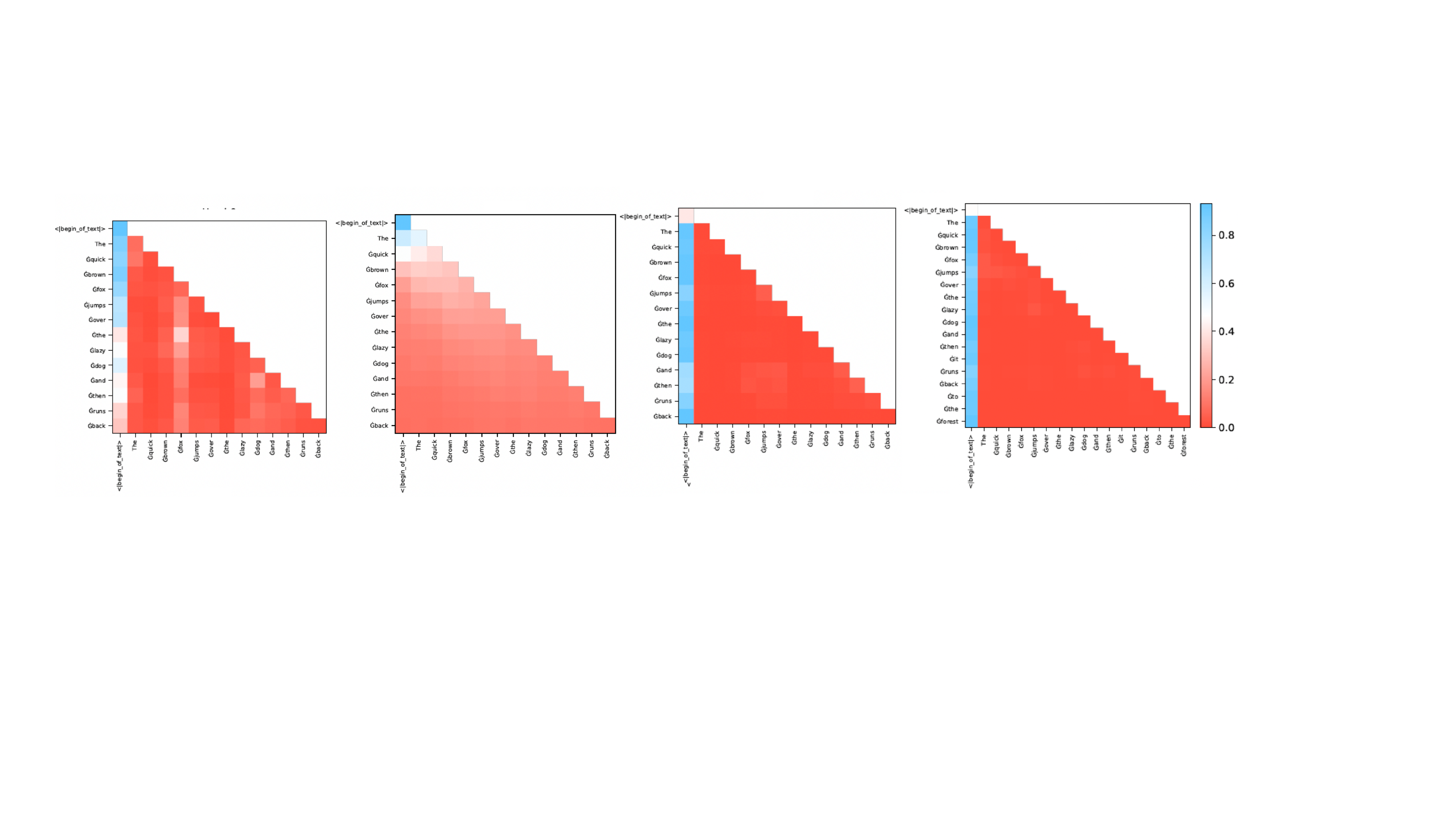}
    \vspace{-0.2in}
    \caption{\textbf{Attention sink mitigation in representative attention maps.} We visualize representative head-0 token-level attention maps on a short causal prompt for trained attention-residual variants. Strong vertical concentration on the leading \texttt{<|begin\_of\_text|>} token indicates an attention sink. Relative to standard variants, null-aware routing weakens both the dominant first-token sink and secondary sink bands, yielding a smoother and more local causal attention pattern.}
    \label{fig:sink2}
    \vspace{-0.2in}
\end{figure}

\textbf{Baselines.} We compare three representative methods spanning the main design paradigms for improving quantization robustness and mitigating attention sink: (1) \textbf{Clip Attention}~\cite{bondarenko2023quantizable}, which clips $\Softmax$ logits to limit the extreme attention scores that drive activation outliers; (2) \textbf{Gated Attention}~\cite{bondarenko2023quantizable}, which introduces an explicit gating mechanism that allows attention heads to perform partial or near no-op updates without inducing large $\Softmax$ magnitudes; and (3) \textbf{OutEffHop}~\cite{hu2024outlier}, which replaces standard attention with an outlier-efficient Hopfield-style retrieval mechanism and provides a principled $\Softmax_1$-based alternative for suppressing outlier formation. This set of baselines covers clipping-based, gating-based, and $\Softmax_1$-based approaches, providing a broad comparison across prior strategies for outlier control and quantization-friendly attention design.

\textbf{Results.}
In \cref{tab:result1}, AoS consistently outperforms the baselines. Unlike OutEffHop, which modifies only the token-level $\Softmax$, AoS normalizes both token- and residual-level pathways, leading to more consistent outlier suppression and stronger post-quantization robustness. Averaged across the two backbones in the plain attention-residual setting, AoS reduces average kurtosis and maximum infinity norm relative to Vanilla by \textbf{96.72\%} and \textbf{83.40\%}, respectively, and lowers perplexity under W8A8, W6A6, and W4A4 by \textbf{71.60\%}, \textbf{72.36\%}, and \textbf{68.42\%}. These gains persist in clipped and gated variants, where the corresponding reductions are \textbf{10.95\%} and \textbf{15.40\%} for the outlier metrics, and \textbf{51.09\%}, \textbf{60.78\%}, and \textbf{57.79\%} for perplexity, showing that AoS complements clipping- and gating-based designs.

\subsection{Outlier Efficiency of OASIS}

\textbf{Setup.} To evaluate model capacity in a standard reasoning setting, we follow \citet{pan2025chainofaction} and benchmark all models on GSM8K with chain-of-thought prompting. We report Pass@1 as the evaluation metric and additionally evaluate perplexity under FP16 and W8A8, as well as Pass@1 under FP16 and W4A4 with OmniQuant.

\textbf{Results.}
As shown in \cref{tab:result2}, OASIS consistently improves numerical stability and robustness under quantization on both models. On Llama-3.2-1B and Qwen3-0.6B, it lowers mean kurtosis and maximum infinity norm to \textbf{2.987} and \textbf{12.624}, and this translates into a mean W8A8 perplexity of \textbf{8.840} and a mean W4A4 GSM8K Pass@1 of \textbf{36.20\%}, corresponding to a \textbf{73.55\%} reduction and a \textbf{23.47\%} gain relative to Vanilla, respectively; the corresponding FP16 averages are \textbf{8.604} and \textbf{38.22\%}. Relative to AoS, OASIS further reduces mean W8A8 perplexity by \textbf{6.85\%} and improves mean W4A4 GSM8K Pass@1 by \textbf{7.18\%}.

\subsection{Improvement in Attention Sinks}
We evaluate sink behavior directly by visualizing token-level attention maps on a short causal prompt containing the leading \texttt{<|begin\_of\_text|>} token, which is known to induce sink formation~\cite{xiao2023efficient}. Using the trained models , we inspect the representative attention head from layer 14 and compare the resulting attention patterns qualitatively. In these maps, stronger vertical concentration on the initial token indicates a more severe sink, whereas a smoother lower-triangular pattern reflects healthier local attention.
As shown in \cref{fig:sink2}, standard attention-residual variants exhibit a pronounced sink on the initial token and, in some cases, develop secondary vertical bands over early tokens, indicating that attention mass remains concentrated on a small subset of positions. AoS partially alleviates this effect, but the bias toward early tokens remains visible. OASIS yields the cleanest pattern: attention is redistributed away from \texttt{<|begin\_of\_text|>}, secondary sinks are suppressed, and the maps recover a smoother causal decay dominated by nearby content tokens. These qualitative changes are consistent with the reductions in outlier metrics and support the central claim that null-aware routing mitigates the structural cause of sink accumulation rather than merely masking its downstream effects.

\begin{table*}[htp]
    \centering
    \caption{\textbf{Comparison of Alternative Attention Normalization Mechanisms.} We compare standard $\Softmax$ with sparse alternatives, including $\Sparsemax$ and $\Entmax$, alongside our method on $\mathtt{Llama\text{-}3.2\text{-}1B}$. We measure activation outliers using average kurtosis and maximum infinity norm $\|\mathbf{x}\|_\infty$, and assess quantization robustness using perplexity under FP16 and W8A8, together with GSM8K accuracy under FP16 and W4A4. The results indicate that while sparse variants help reduce outlier magnitude compared to standard $\mathtt{Softmax}$, our approach achieves the strongest overall outlier suppression and the smallest degradation in low-bit quantization; best results are shown in \textbf{bold}, and second-best results are \underline{underlined}.
}
\vspace{-0.1in}
    \label{tab:result3}
    \resizebox{\textwidth}{!}{%
    \begin{tabular}{ccccccc}
    \toprule
    \multirow{2}{*}{Method} 
    & \multirow{2}{*}{Average kurtosis} 
    & \multirow{2}{*}{Maximum $\ell_\infty$ norm} 
    & \multicolumn{2}{c}{Perplexity $\downarrow$} 
    & \multicolumn{2}{c}{GSM8K $\uparrow$} \\
    \cline{4-5} \cline{6-7}
    & & & FP16 & W8A8 & FP16 & W4A4 \\
    \midrule
    $\Softmax$
         & 91.706 $\pm$ 0.083
        & 84.433 $\pm$ 0.146
        & 8.981 $\pm$ 0.010
        & 30.189 $\pm$ 0.063
         & \underline{13.64} $\pm$ 0.17
        & 12.97 $\pm$ 0.24 \\
    $\Sparsemax$
    &  87.322 $\pm$ 0.088
    & 79.991 $\pm$ 0.143
    &  \underline{8.552} $\pm$ 0.013
    & 19.814 $\pm$ 0.061
    &  13.61 $\pm$ 0.16
    &  12.98 $\pm$ 0.22 \\
    $\Entmax$
    &  \underline{13.134} $\pm$ 0.032
    & \underline{29.381} $\pm$ 0.097
    & 8.706 $\pm$ 0.010
    & \underline{19.600} $\pm$ 0.056
    &  13.58 $\pm$ 0.14
    &  \underline{13.16} $\pm$ 0.18 \\
    OASIS
    & \cellcolor{LightCyan} \textbf{3.024} $\pm$ 0.015
    & \cellcolor{LightCyan} \textbf{12.250} $\pm$ 0.067
    & \cellcolor{LightCyan} \textbf{8.005} $\pm$ 0.007
    & \cellcolor{LightCyan} \textbf{8.162} $\pm$ 0.009
    & \cellcolor{LightCyan} \textbf{13.75} $\pm$ 0.12
    & \cellcolor{LightCyan} \textbf{13.52} $\pm$ 0.17 \\
    \bottomrule
\end{tabular}
    }
\end{table*}

\subsection{Ablation Study}
To isolate whether the gains of OASIS can be attributed to alternative attention normalization schemes, we tested on $\mathtt{Llama\text{-}3.2\text{-}1B}$. Specifically, we compare $\Softmax$, $\Sparsemax$, $\Entmax$ and OASIS using average kurtosis, maximum $\ell_\infty$ norm, perplexity, and GSM8K Pass@1 (\%).
As shown in \cref{tab:result3}, sparse normalizers reduce outliers relative to $\Softmax$, with $\Entmax$ as the strongest sparse baseline. \sys achieves a \textbf{96.70\%} reduction in average kurtosis and an \textbf{85.49\%} reduction in maximum $\ell_\infty$ norm compared with $\Softmax$, indicating the strongest overall outlier suppression. Moreover, \sys reduces perplexity degradation by \textbf{99.26\%} and reduces the GSM8K quantization-induced accuracy drop by \textbf{65.67\%}.

\section{Discussion and Conclusion}
\label{sec:conclusion}We show that dual normalization in AttnResidual creates a structural tendency towards attention sinks, activation outliers and quantization brittleness, because near no-op updates must be expressed through real tokens and real depth branches. \sys addresses this problem by introducing null-aware $\Softmax_1$ routing at both the token and depth levels and by coupling token-level null evidence back into depth routing. Across the evaluated backbones and settings, \sys reduces maximum infinity norm by 84.14\% and average kurtosis by 96.77\% on average, lowers W8A8 perplexity by 73.55\%, and improves W4A4 GSM8K Pass@1 by 23.47\%, all without changing the training objective or overall architectural skeleton. More broadly, these results suggest that explicit null capacity is a simple way to stabilize residual routing under dual normalization and improve robustness to long contexts and quantization.

\clearpage
\section*{Acknowledgments}
Haozheng Luo is partially supported by the Lambda Researcher Grant and Adobe Fellow. 
This research was supported in part through the computational resources and staff contributions provided for the Quest high performance computing facility at Northwestern University which is jointly supported by the Office of the Provost, the Office for Research, and Northwestern University Information Technology.
The content is solely the responsibility of the authors and does not necessarily represent the official
views of the funding agencies.

\appendix
\label{sec:append}
\part*{Appendix}
{
\setlength{\parskip}{-0em}
\startcontents[sections]
\printcontents[sections]{ }{1}{}
}

{
\setlength{\parskip}{-0em}
\startcontents[sections]
\printcontents[sections]{ }{1}{}
}

\begin{mdframed}
\label{sec:reproduce}
{\color{Blue}
\subsection*{Reproducibility}
Code is at this anonymous \href{https://anonymous.4open.science/r/OASIS-8574}{link}.
We promise to open-source after acceptance.
}
\end{mdframed}

\section{Impact Statement}
\label{ap:impact}
We believe that this methodology can strengthen the core robustness of foundation models, including large language models, by improving quantization and mitigating attention sinks. However, such modifications may also amplify biases present in the training data, potentially leading to unfair or discriminatory outcomes for underrepresented groups.

\section{Additional Related Work}

\textbf{Quantization.}
Quantization methods for transformer-based foundation models are commonly divided into weight-only quantization and weight-activation quantization \cite{gholami2022survey}. In the weight-only setting, GPTQ~\citep{frantar2023gptq} performs post-training quantization through block-wise second-order reconstruction. SpQR~\citep{dettmers2024spqr} combines sparsity with quantization to improve the compression--accuracy trade-off. OWQ~\citep{lee2024owq} protects outlier-related weights through outlier-aware weight quantization. AWQ~\citep{lin2024awq} emphasizes activation-aware scaling and avoids part of the hardware inefficiency of mixed-precision storage. QUIP~\citep{chee2023quip} studies very low-bit quantization with theoretical guarantees. QLoRA~\citep{dettmers2024qlora} restores capability through low-rank adaptation on quantized backbones, while LoftQ~\citep{li2024loftq} makes quantization explicitly aware of later LoRA fine-tuning.

For weight–activation quantization, prior work jointly compresses weights and activations, with several approaches proposed; for instance, LLM.int8()~\citep{llm.int.8} uses mixed-precision decomposition to address the problem. Outlier Suppression~\citep{wei2022outlier}, on the other hand, relies on migration and clipping to reduce the imbalance in activation ranges. SmoothQuant~\citep{xiao2023smoothquant} takes a different approach by migrating some of the activation difficulty into the weights through an equivalent transformation. More recently, OmniQuant~\citep{shao2024omniquant} has made significant improvements to low-bit post-training quantization through omnidirectional calibration. QuaRot~\citep{ashkboos2024quarot} is another technique that removes hidden-state outliers through rotation, enabling end-to-end low-bit quantization of weights, activations, and KV cache. Building on this, SpinQuant~\citep{liu2024spinquant} replaces fixed rotations with learned rotations to further improve quantized accuracy. Meanwhile, FlatQuant~\citep{sun2024flatquant} argues that flatness remains crucial even after prior transformations and introduces learnable affine transformations to flatten weights and activations. One approach that is particularly relevant to our setting is Prefixing Attention Sinks~\citep{son2024prefixing}. Instead of just calibrating activations, it shows that inserting sink-like prefix states can directly mitigate activation outliers and improve activation quantization. Other recent work on KV-cache has also highlighted the connection between sink preservation and quantization robustness. For example, KVSink~\citep{su2025kvsink} explores how attention sinks interact with KV-cache quantization and preserves predicted sink tokens during compression. InnerQ~\citep{hosseini2026innerq} takes this a step further by incorporating high-precision windows for both recent tokens and sink tokens in a hardware-aware KV-cache quantization scheme. What sets our work apart is that we target the structural origin of outlier-prone activations in AttnResidual~\cite{team2026attention}—namely, dual-normalized routing—rather than relying on post hoc calibration or correction.

\textbf{Attention Residuals.}
A separate literature studies how computation should be allocated across depth. Universal Transformer~\citep{dehghani2019universal} performs iterative refinement with shared parameters across steps. LayerDrop~\citep{fan2020layerdrop} enables structured layer removal and reduced-depth inference. FastBERT~\citep{liu2020fastbert} uses self-distillation for adaptive inference depth. PABEE~\citep{zhou2020bertlosespatience} introduces patience-based early exit for faster inference. More recently, Mixture-of-Depths~\citep{raposo2024mixture} dynamically allocates computation across tokens and layers under an explicit compute budget. These methods all relax the use of fixed-depth computation, but they do so through halting, skipping, or conditional execution.

When it comes to computation, AttnResidual~\citep{team2026attention} takes a distinct approach. Rather than deciding when to stop computing, it replaces the traditional method of accumulating residuals with a learned approach that aggregates the outputs of previous layers. This design allows for more flexibility in routing, but it also adds an extra layer of normalization. Our work builds on this foundation. Specifically, we explore how the dual-normalization design of AttnResidual can amplify outliers, affect sink accumulation, and create pressure on no-op operations. To address these challenges, we propose a null-aware routing mechanism that makes both token-level and depth-level aggregation more robust.

\section{Proofs of Main Text}
\label{ap:proof}
\subsection{\cref{lemma:1}}
\label{proof:llama1}
\textit{Proof of \cref{lemma:1}.} 
Write
$$
A_t^{(\ell)}=p_{t,o}^{(\ell)}v_o^{(\ell)}+\sum_{j\neq o}p_{t,j}^{(\ell)}v_j^{(\ell)}.
$$
By the alignment assumption, for the unit vector $u$ we have
$$
\Big\langle \sum_{j\neq o}p_{t,j}^{(\ell)}v_j^{(\ell)},u\Big\rangle
=
\sum_{j\neq o}p_{t,j}^{(\ell)}\langle v_j^{(\ell)},u\rangle
\ge
c_1\sum_{j\neq o}p_{t,j}^{(\ell)}.
$$
Hence
$$
\Big\|\sum_{j\neq o}p_{t,j}^{(\ell)}v_j^{(\ell)}\Big\|
\ge
c_1\sum_{j\neq o}p_{t,j}^{(\ell)}.
$$
Using
$$
\|A_t^{(\ell)}\|
=
\Big\|\sum_{j\neq o}p_{t,j}^{(\ell)}v_j^{(\ell)}+p_{t,o}^{(\ell)}v_o^{(\ell)}\Big\|
\ge
\Big\|\sum_{j\neq o}p_{t,j}^{(\ell)}v_j^{(\ell)}\Big\|-p_{t,o}^{(\ell)}\|v_o^{(\ell)}\|,
$$
we obtain
$$
\|A_t^{(\ell)}\|\ge c_1\sum_{j\neq o}p_{t,j}^{(\ell)}-c_0p_{t,o}^{(\ell)}
=c_1-(c_0+c_1)p_{t,o}^{(\ell)}.
$$
Based on \cref{assum:1}, if $\|A_t^{(\ell)}\|\le\delta$, then
$$
\delta\ge c_1-(c_0+c_1)p_{t,o}^{(\ell)}
\iff
p_{t,o}^{(\ell)}\ge\frac{c_1-\delta}{c_0+c_1}.
$$
Hence no-op feasibility forces large attention on a low-contribution token, proving no-op outlier emergence in the OutEffHop~\cite{hu2024outlier}.

\subsection{\cref{proposition:attnresidual_pathology}}
\label{proof:therm1}
\textit{Proof of \cref{proposition:attnresidual_pathology}.} 
By \cref{assum:3}, the AttnResidual leakage, concentration, and entropy-collapse terms satisfy
$$
L_t^{\mathrm{AR}}\ge L_t^{\mathrm{V}},
\qquad
C_t^{\mathrm{AR}}\ge C_t^{\mathrm{V}},
\qquad
E_t^{\mathrm{AR}}\ge E_t^{\mathrm{V}}.
$$
Since the pathology scores are defined by
$$
\mathcal{S}_t^{\mathrm{AttnResidual}}=L_t^{\mathrm{AR}}+\lambda_1 C_t^{\mathrm{AR}}+\lambda_2 E_t^{\mathrm{AR}},
$$
and
$$
\mathcal{S}_t^{\mathrm{Vanilla}}=L_t^{\mathrm{V}}+\lambda_1 C_t^{\mathrm{V}}+\lambda_2 E_t^{\mathrm{V}},
$$
with nonnegative coefficients $\lambda_1,\lambda_2$, it follows immediately that
$$
\mathcal{S}_t^{\mathrm{AttnResidual}}\ge \mathcal{S}_t^{\mathrm{Vanilla}}.
$$
This proves the theorem.

\subsection{\cref{thm:therm2}}
\label{proof:therm2}
\textit{Proof of \cref{thm:therm2}.} 
We follow the residual-view analysis in OutEffHop~\cite{hu2024outlier}: for tokens in
$\mathcal{T}_{\mathrm{noop}}$, stable optimization prefers small effective residual update magnitude.
In AttnResidual, the depth-composed residual update can be written as
$$
\Delta h_t^{(\ell)}=\sum_{i=0}^{\ell-1}\alpha_{i\to\ell,t}\,u_{i,t},
$$
where $u_{0,t}=h_t^{(1)}$ and $u_{i,t}=f_i(h_t^{(i)})$ for $i\ge 1$.

For no-op tokens, training pressure requires
$$
\|\Delta h_t^{(\ell)}\|\le \delta_t,
\qquad t\in\mathcal{T}_{\mathrm{noop}},
$$
with small $\delta_t$. Because depth coefficients satisfy simplex constraints
($\alpha_{i\to\ell,t}\ge 0$, $\sum_i\alpha_{i\to\ell,t}=1$), this becomes a constrained routing problem:
allocate almost all mass to small-value depth sources (small residual-update magnitude), i.e., no-op routes.

Let
$$
i_t^*\in\arg\min_{i\in\{0,\dots,\ell-1\}} \|u_{i,t}\|.
$$
By assumption, there exists a unit vector $w_t^{(\ell)}\in\mathbb{R}^d$ such that
$$
\langle u_{i,t},w_t^{(\ell)}\rangle\ge b_1,
\qquad i\neq i_t^*,
$$
while the minimal-update branch satisfies $\|u_{i_t^*,t}\|\le b_0$.
Projecting the residual update onto $w_t^{(\ell)}$ gives
$$
\left\langle \Delta h_t^{(\ell)}, w_t^{(\ell)}\right\rangle
=
\sum_{i=0}^{\ell-1}\alpha_{i\to\ell,t}\langle u_{i,t},w_t^{(\ell)}\rangle.
$$
Since $\|w_t^{(\ell)}\|=1$ and $\|\Delta h_t^{(\ell)}\|\le\delta_t$, we have
$$
\left|\left\langle \Delta h_t^{(\ell)}, w_t^{(\ell)}\right\rangle\right|
\le \delta_t.
$$
On the other hand,
$$
\left\langle \Delta h_t^{(\ell)}, w_t^{(\ell)}\right\rangle
=
\alpha_{i_t^*\to\ell,t}\langle u_{i_t^*,t},w_t^{(\ell)}\rangle
+ \sum_{i\neq i_t^*}\alpha_{i\to\ell,t}\langle u_{i,t},w_t^{(\ell)}\rangle.
$$
Using $|\langle u_{i_t^*,t},w_t^{(\ell)}\rangle|\le \|u_{i_t^*,t}\|\le b_0$ and
$\langle u_{i,t},w_t^{(\ell)}\rangle\ge b_1$ for $i\neq i_t^*$, we obtain
$$
\left\langle \Delta h_t^{(\ell)}, w_t^{(\ell)}\right\rangle
\ge
-\alpha_{i_t^*\to\ell,t} b_0 + b_1\sum_{i\neq i_t^*}\alpha_{i\to\ell,t}.
$$
Combining this with $\langle \Delta h_t^{(\ell)}, w_t^{(\ell)}\rangle\le \delta_t$ yields
$$
b_1\sum_{i\neq i_t^*}\alpha_{i\to\ell,t}
\le
\delta_t + \alpha_{i_t^*\to\ell,t} b_0.
$$
Using the simplex constraint $\sum_i \alpha_{i\to\ell,t}=1$, this becomes
$$
b_1(1-\alpha_{i_t^*\to\ell,t})
\le
\delta_t + b_0\alpha_{i_t^*\to\ell,t},
$$
and hence
$$
\alpha_{i_t^*\to\ell,t}
\ge
\frac{b_1-\delta_t}{b_0+b_1}.
$$
Equivalently,
$$
1-\alpha_{i_t^*\to\ell,t}
\le
\frac{b_0+\delta_t}{b_0+b_1}.
$$
Therefore, concentration to a simplex vertex is guaranteed only in a high-separation regime
where $b_0+\delta_t\ll b_1$ (e.g., $b_0/b_1\to 0$ with small $\delta_t$). Under this condition,
$\alpha_{i_t^*\to\ell,t}$ becomes close to $1$ and
$\sum_{i\neq i_t^*}\alpha_{i\to\ell,t}$ becomes close to $0$.

Across many no-op tokens, optimization repeatedly reinforces minimal-disturbance route(s),
which yields depth-attention concentration. Therefore, under the same separation condition,
AttnResidual tends toward a single dominant depth path (approximately vanilla-like residual routing).

\subsection{\cref{lemma:lemma2}}
\label{proof:llemma2}
\textit{Proof of \cref{lemma:lemma2}.} 
Let $\mathcal{K}$ denote sink-prone positions from \cref{assum:2}. For a query token $t$, define
the total sink mass at layer $\ell$ by
$$
\sigma_t^{(\ell)} := \sum_{j\in\mathcal{K}} p_{t,j}^{(\ell)}.
$$
In AttnResidual, effective sink contribution is depth-aggregated:
$$
\Sigma_t := \sum_{\ell} \alpha_t^{(\ell)}\sigma_t^{(\ell)}.
$$

Before collapse, several branches can satisfy $\sigma_t^{(\ell)}>0$ on the same sink set
$\mathcal{K}$, so nonnegative depth weights add these contributions through convex combination.
Thus $\Sigma_t$ is a depth-weighted average of branchwise sink masses, satisfying
$$
\min_\ell \sigma_t^{(\ell)} \le \Sigma_t \le \max_\ell \sigma_t^{(\ell)}.
$$
This establishes the pre-collapse part of the lemma.

After conditional depth collapse, there is a dominant branch $\ell^*$ with
$\alpha_t^{(\ell^*)}\approx 1$, hence
$$
\Sigma_t \approx \sigma_t^{(\ell^*)}.
$$
Therefore the sink preference learned by the dominant branch is inherited by the effective model. This establishes the post-collapse part of the lemma and shows that sink behavior is preserved across the transition from multi-branch routing to near-single-branch routing.

\subsection{\cref{thm:therm3}}
\label{proof:therm3}
\textit{Proof of \cref{thm:therm3}.} 
For fixed query token $t$ and branch $\ell$, denote
$$
S_t^{(\ell)}:=\sum_{k=1}^m \exp z_{t,k}^{(\ell)}.
$$
Under standard $\Softmax$, all probability mass is assigned to real tokens:
$$
\sum_{j=1}^m p_{t,j}^{(\ell)}=1.
$$
Hence if the desired update is near no-op, the model must still place that unit mass on real tokens,
which is exactly the mechanism used in \cref{lemma:1} to induce no-op outliers.
Moreover, when all real-token logits are shifted to very negative values together, standard $\Softmax$
remains normalized on the real-token simplex and tends to a uniform allocation (by symmetry), i.e.,
$p_{t,j}^{(\ell)}\to 1/m$ for all $j$ (equivalently $1/k$ if there are $k$ candidates), rather than
allocating zero total mass to real tokens.

Under $\Softmax_1$, there is an explicit null channel with mass
$$
\tilde p_{t,\varnothing}^{(\ell)}=\frac{1}{1+S_t^{(\ell)}},
\qquad
\sum_{j=1}^m\tilde p_{t,j}^{(\ell)}=\frac{S_t^{(\ell)}}{1+S_t^{(\ell)}}=1-\tilde p_{t,\varnothing}^{(\ell)}.
$$
Therefore, near no-op behavior can be implemented by increasing $\tilde p_{t,\varnothing}^{(\ell)}$
rather than concentrating mass on particular real tokens.
In the same ``all logits go to very negative values'' limit, we have
$\tilde p_{t,j}^{(\ell)}\to 0$ for each real token and
$\tilde p_{t,\varnothing}^{(\ell)}\to 1$.
So $\Softmax_1$ realizes the desired zero-update tendency through the explicit null channel, whereas
standard $\Softmax$ cannot because its real-token mass is always exactly one.

To make this explicit, let
$$
\tilde A_t^{(\ell)}:=\sum_{j=1}^m \tilde p_{t,j}^{(\ell)}v_j^{(\ell)}
$$
be the real-token contribution. If $\|v_j^{(\ell)}\|\le V_{\max}$ for all $j$, then
$$
\|\tilde A_t^{(\ell)}\|
\le
\sum_{j=1}^m \tilde p_{t,j}^{(\ell)}\|v_j^{(\ell)}\|
\le
V_{\max}\sum_{j=1}^m\tilde p_{t,j}^{(\ell)}
=
V_{\max}\big(1-\tilde p_{t,\varnothing}^{(\ell)}\big).
$$
So requiring $\|\tilde A_t^{(\ell)}\|\le \delta$ is feasible whenever
$$
\tilde p_{t,\varnothing}^{(\ell)}\ge 1-\frac{\delta}{V_{\max}},
$$
which does not force any specific real token to dominate.

An analogous bound holds at the depth level. Let $\tilde\alpha_{\varnothing,t}$ denote the depth-null
mass under depth-$\Softmax_1$, and let
$$
\tilde\Delta h_t^{(\ell)}:=\sum_{i=0}^{\ell-1}\tilde\alpha_{i\to\ell,t}u_{i,t}
$$
be the non-null depth contribution. If $\|u_{i,t}\|\le U_{\max}$ for all admissible branches $i$,
then
$$
\|\tilde\Delta h_t^{(\ell)}\|
\le
U_{\max}\sum_{i=0}^{\ell-1}\tilde\alpha_{i\to\ell,t}
=
U_{\max}(1-\tilde\alpha_{\varnothing,t}).
$$
Therefore a small effective depth update is feasible whenever
$$
\tilde\alpha_{\varnothing,t}\ge 1-\frac{\delta_t}{U_{\max}},
$$
so depth-level near-no-op behavior can be achieved through the explicit null route without forcing
the non-null depth weights toward a simplex vertex.

This directly weakens sink and outlier incentives: increasing null mass shrinks the total real-token
budget $1-\tilde p_{t,\varnothing}^{(\ell)}$, so both irrelevant-token leakage and maximal token
concentration are reduced compared with the unit-mass $\Softmax$ case.

The same relaxation applies to depth routing: when no-op mass exists at branch level, small effective
updates can be attained without pushing depth weights to an extreme simplex vertex merely to suppress
real-token updates. Hence $\Softmax_1$ alleviates the structural pressure behind no-op outliers, sink
accumulation, and conditional depth collapse.

\section{Assumptions of Main Text}
\label{ap:assume}
\begin{assumption}[No-op regime]
There exists a non-empty subset of token positions $\mathcal{T}_{\mathrm{noop}}$ such that for each
$t\in\mathcal{T}_{\mathrm{noop}}$, the attention branch is near no-op:
$$
\|A_t^{(\ell)}\| \le \delta,
$$
for small $\delta>0$.
\label{assum:1}
\end{assumption}

\begin{assumption}[Sink-prone positions]
There exists a set of weakly relevant positions $\mathcal{K}$ (e.g., prefix/BOS-like positions) that repeatedly receive non-trivial attention mass across queries.
\label{assum:2}
\end{assumption}

\begin{assumption}[Token separation for no-op branches]
For each $t\in\mathcal{T}_{\mathrm{noop}}$ and branch $\ell$, there exists one candidate no-op token $o$
and a unit vector $u\in\mathbb{R}^d$ such that
$$
\|v_o^{(\ell)}\|\le c_0,
$$
and
$$
\langle v_j^{(\ell)},u\rangle\ge c_1,\qquad j\neq o.
$$
In particular, $\|v_j^{(\ell)}\|\ge c_1$ for $j\neq o$. Assume also that $0\le c_0<c_1$.
\label{assum:token_sep}
\end{assumption}

\begin{assumption}[Matched pathology dominance conditions]
For the same token $t$, let
$$
L_t^{\mathrm{AR}}:=\sum_{\ell}\alpha_t^{(\ell)}\sum_{j\in\mathcal{N}_t} p_{t,j}^{(\ell)},
\qquad
C_t^{\mathrm{AR}}:=\max_{\ell,j}\alpha_t^{(\ell)}p_{t,j}^{(\ell)},
$$
$$
E_t^{\mathrm{AR}}:=-H(\alpha_t)-\sum_{\ell}\alpha_t^{(\ell)}H(p_t^{(\ell)}),
$$
and let
$$
L_t^{\mathrm{V}}:=\sum_{j\in\mathcal{N}_t}p_{t,j},
\qquad
C_t^{\mathrm{V}}:=\max_j p_{t,j},
\qquad
E_t^{\mathrm{V}}:=-H(p_t).
$$
Assume that under matched no-op tolerance,
$$
L_t^{\mathrm{AR}}\ge L_t^{\mathrm{V}},
\qquad
C_t^{\mathrm{AR}}\ge C_t^{\mathrm{V}},
\qquad
E_t^{\mathrm{AR}}\ge E_t^{\mathrm{V}}.
$$
\label{assum:3}
\end{assumption}

\section{Experimental System and Implementation Settings}
\label{app:resources}
All experiments are run on four NVIDIA H100 (80GB) GPUs with a 12-core Intel Xeon Gold 6338 CPU, using PyTorch and the Hugging Face Transformers library. For inference, we use the default system prompt with temperature 0.6, top-p 0.95, and a maximum generation length of 4096 tokens.

For continued pretraining, we train for 500 steps with sequence length 2048 and block size 512 via Hugging Face Accelerate, using a learning rate of $5\times10^{-5}$ with a linear scheduler and 100 warmup steps, batch size 1 with gradient accumulation 64, gradient clipping at 1.0, and weight decay 0.1. The OASIS coupling parameter $\beta$ is implemented as a learnable non-negative scalar via softplus, initialized at $-5$ (yielding $\beta\approx0.007$) to ensure near no-op initialization and stable adaptation.

\section{Limitations}
\label{ap:limitation}
Empirically, our evaluation focuses on a limited set of backbone models. Although OASIS consistently improves outlier statistics and post-quantization performance in our experiments, broader validation on larger models and additional downstream tasks is needed in future work. Also, while the null channel can reduce no-op pressure, excessive null allocation may suppress useful weak signals or alter interpretability of attention maps. Future work should study adaptive control of the null strength, its interaction with calibration and fine-tuning, and its deployment cost in optimized inference kernels.

\section{Disclosure of LLM Usage}
\label{ap:llm}
In this work, we use large language models (LLMs) to improve the conciseness and precision of the writing. We also use LLMs to assist with theoretical derivations in \cref{ap:proof} and to better understand related literature (e.g., via Claude).

\clearpage
\def\arxivfont{\rm}
\bibliographystyle{plainnat}

\bibliography{refs}

\begin{thebibliography}{58}
\providecommand{\natexlab}[1]{#1}
\providecommand{\url}[1]{\texttt{#1}}
\expandafter\ifx\csname urlstyle\endcsname\relax
  \providecommand{\doi}[1]{doi: #1}\else
  \providecommand{\doi}{doi: \begingroup \urlstyle{rm}\Url}\fi

\bibitem[Anand et~al.(2026)Anand, Cappellazzo, Petridis, and Pantic]{anand2026mitigating}
Anand Anand, Umberto Cappellazzo, Stavros Petridis, and Maja Pantic.
\newblock Mitigating attention sinks and massive activations in audio-visual speech recognition with llms.
\newblock In \emph{ICASSP 2026-2026 IEEE International Conference on Acoustics, Speech and Signal Processing (ICASSP)}, pages 17942--17946. IEEE, 2026.

\bibitem[Ashkboos et~al.(2024)Ashkboos, Mohtashami, Croci, Li, Cameron, Jaggi, Alistarh, Hoefler, and Hensman]{ashkboos2024quarot}
Saleh Ashkboos, Amirkeivan Mohtashami, Maximilian~L. Croci, Bo~Li, Pashmina Cameron, Martin Jaggi, Dan Alistarh, Torsten Hoefler, and James Hensman.
\newblock Quarot: Outlier-free 4-bit inference in rotated {LLM}s.
\newblock In \emph{The Thirty-eighth Annual Conference on Neural Information Processing Systems}, 2024.

\bibitem[Barbero et~al.(2025)Barbero, Arroyo, Gu, Perivolaropoulos, Veli{\v{c}}kovi{\'c}, Pascanu, and Bronstein]{barbero2025firsttoken}
Federico Barbero, Alvaro Arroyo, Xiangming Gu, Christos Perivolaropoulos, Petar Veli{\v{c}}kovi{\'c}, Razvan Pascanu, and Michael~M. Bronstein.
\newblock Why do {LLM}s attend to the first token?
\newblock In \emph{Second Conference on Language Modeling}, 2025.

\bibitem[Bondarenko et~al.(2021)Bondarenko, Nagel, and Blankevoort]{bondarenko2021understanding}
Yelysei Bondarenko, Markus Nagel, and Tijmen Blankevoort.
\newblock Understanding and overcoming the challenges of efficient transformer quantization, 2021.

\bibitem[Bondarenko et~al.(2023)Bondarenko, Nagel, and Blankevoort]{bondarenko2023quantizable}
Yelysei Bondarenko, Markus Nagel, and Tijmen Blankevoort.
\newblock Quantizable transformers: Removing outliers by helping attention heads do nothing.
\newblock \emph{Advances in Neural Information Processing Systems (NeurIPS)}, 36, 2023.

\bibitem[Chee et~al.(2023)Chee, Cai, Kuleshov, and De~Sa]{chee2023quip}
Jerry Chee, Yaohui Cai, Volodymyr Kuleshov, and Christopher~M. De~Sa.
\newblock {QuIP}: 2-bit quantization of large language models with guarantees.
\newblock In \emph{Advances in Neural Information Processing Systems}, volume~36, 2023.

\bibitem[Clark et~al.(2019)Clark, Khandelwal, Levy, and Manning]{clark2019what}
Kevin Clark, Urvashi Khandelwal, Omer Levy, and Christopher~D. Manning.
\newblock What does {BERT} look at? an analysis of {BERT}'s attention.
\newblock In \emph{Proceedings of the 2019 ACL Workshop BlackboxNLP: Analyzing and Interpreting Neural Networks for NLP}, pages 276--286, 2019.

\bibitem[Cobbe et~al.(2021)Cobbe, Kosaraju, Bavarian, Chen, Jun, Kaiser, Plappert, Tworek, Hilton, Nakano, et~al.]{cobbe2021training}
Karl Cobbe, Vineet Kosaraju, Mohammad Bavarian, Mark Chen, Heewoo Jun, Lukasz Kaiser, Matthias Plappert, Jerry Tworek, Jacob Hilton, Reiichiro Nakano, et~al.
\newblock Training verifiers to solve math word problems.
\newblock \emph{arXiv preprint arXiv:2110.14168}, 2021.

\bibitem[Dadgarnia et~al.(2026)Dadgarnia, Tabesh, Nikdan, Helcig, Kurtic, and Alistarh]{dadgarnia2026gsq}
Alireza Dadgarnia, Soroush Tabesh, Mahdi Nikdan, Michael Helcig, Eldar Kurtic, and Dan Alistarh.
\newblock Gsq: Highly-accurate low-precision scalar quantization for llms via gumbel-softmax sampling.
\newblock \emph{arXiv preprint arXiv:2604.18556}, 2026.

\bibitem[Dehghani et~al.(2019)Dehghani, Gouws, Vinyals, Uszkoreit, and Kaiser]{dehghani2019universal}
Mostafa Dehghani, Stephan Gouws, Oriol Vinyals, Jakob Uszkoreit, and {\L}ukasz Kaiser.
\newblock Universal transformers.
\newblock In \emph{International Conference on Learning Representations}, 2019.

\bibitem[Dettmers et~al.(2022)Dettmers, Lewis, Belkada, and Zettlemoyer]{llm.int.8}
Tim Dettmers, Mike Lewis, Younes Belkada, and Luke Zettlemoyer.
\newblock Gpt3.int8(): 8-bit matrix multiplication for transformers at scale.
\newblock In S.~Koyejo, S.~Mohamed, A.~Agarwal, D.~Belgrave, K.~Cho, and A.~Oh, editors, \emph{Advances in Neural Information Processing Systems}, volume~35, pages 30318--30332. Curran Associates, Inc., 2022.

\bibitem[Dettmers et~al.(2023)Dettmers, Pagnoni, Holtzman, and Zettlemoyer]{dettmers2024qlora}
Tim Dettmers, Artidoro Pagnoni, Ari Holtzman, and Luke Zettlemoyer.
\newblock Qlora: Efficient finetuning of quantized llms.
\newblock In \emph{The Thirty-seventh Conference on Neural Information Processing Systems (NeurIPS)}, 2023.

\bibitem[Dettmers et~al.(2024)Dettmers, Svirschevski, Egiazarian, Kuznedelev, Frantar, Ashkboos, Borzunov, Hoefler, and Alistarh]{dettmers2024spqr}
Tim Dettmers, Ruslan~A. Svirschevski, Vage Egiazarian, Denis Kuznedelev, Elias Frantar, Saleh Ashkboos, Alexander Borzunov, Torsten Hoefler, and Dan Alistarh.
\newblock {SpQR}: A sparse-quantized representation for near-lossless {LLM} weight compression.
\newblock In \emph{International Conference on Learning Representations}, 2024.

\bibitem[Fan et~al.(2020)Fan, Grave, and Joulin]{fan2020layerdrop}
Angela Fan, Edouard Grave, and Armand Joulin.
\newblock Reducing transformer depth on demand with structured dropout.
\newblock In \emph{International Conference on Learning Representations}, 2020.

\bibitem[Frantar et~al.(2023)Frantar, Ashkboos, Hoefler, and Alistarh]{frantar2023gptq}
Elias Frantar, Saleh Ashkboos, Torsten Hoefler, and Dan Alistarh.
\newblock {GPTQ}: Accurate post-training quantization for generative pre-trained transformers.
\newblock In \emph{International Conference on Learning Representations}, 2023.

\bibitem[Gholami et~al.(2022)Gholami, Kim, Dong, Yao, Mahoney, and Keutzer]{gholami2022survey}
Amir Gholami, Sehoon Kim, Zhen Dong, Zhewei Yao, Michael~W Mahoney, and Kurt Keutzer.
\newblock A survey of quantization methods for efficient neural network inference.
\newblock In \emph{Low-power computer vision}, pages 291--326. Chapman and Hall/CRC, 2022.

\bibitem[Gomez(2026)]{gomez2026attnres}
Kye Gomez.
\newblock attn\_res: Implementation of attention residuals.
\newblock \url{https://github.com/kyegomez/attn_res}, 2026.
\newblock GitHub repository. Unofficial PyTorch implementation of Attention Residuals. Accessed: 2026-04-08.

\bibitem[Grattafiori et~al.(2024)Grattafiori, Dubey, Jauhri, Pandey, Kadian, Al-Dahle, Letman, Mathur, Schelten, Vaughan, et~al.]{grattafiori2024llama}
Aaron Grattafiori, Abhimanyu Dubey, Abhinav Jauhri, Abhinav Pandey, Abhishek Kadian, Ahmad Al-Dahle, Aiesha Letman, Akhil Mathur, Alan Schelten, Alex Vaughan, et~al.
\newblock The llama 3 herd of models.
\newblock \emph{arXiv preprint arXiv:2407.21783}, 2024.

\bibitem[Gu et~al.(2025)Gu, Pang, Du, Liu, Zhang, Du, Wang, and Lin]{gu2025attentionsink}
Xiangming Gu, Tianyu Pang, Chao Du, Qian Liu, Fengzhuo Zhang, Cunxiao Du, Ye~Wang, and Min Lin.
\newblock When attention sink emerges in language models: An empirical view.
\newblock In \emph{International Conference on Learning Representations}, 2025.

\bibitem[Guo et~al.(2020)Guo, Dai, Vrande{\v{c}}i{\'c}, and Al-Rfou]{guo2020wiki}
Mandy Guo, Zihang Dai, Denny Vrande{\v{c}}i{\'c}, and Rami Al-Rfou.
\newblock Wiki-40b: Multilingual language model dataset.
\newblock In \emph{Proceedings of the Twelfth Language Resources and Evaluation Conference}, pages 2440--2452, 2020.

\bibitem[He et~al.(2024)He, Luo, and Wang]{he2024st}
Haoyu He, Haozheng Luo, and Qi~R Wang.
\newblock St-moe-bert: A spatial-temporal mixture-of-experts framework for long-term cross-city mobility prediction.
\newblock In \emph{Proceedings of the 2nd ACM SIGSPATIAL International Workshop on Human Mobility Prediction Challenge}, pages 10--15, 2024.

\bibitem[Hosseini et~al.(2026)Hosseini, Ardakani, and Gross]{hosseini2026innerq}
Sayed Mohammadreza~Tayaranian Hosseini, Amir Ardakani, and Warren~J. Gross.
\newblock Innerq: Hardware-aware tuning-free quantization of kv cache for large language models.
\newblock \emph{arXiv preprint arXiv:2602.23200}, 2026.

\bibitem[Hu et~al.(2024)Hu, Chang, Luo, Chen, Li, Wang, and Liu]{hu2024outlier}
Jerry Yao-Chieh Hu, Pei-Hsuan Chang, Robin Luo, Hong-Yu Chen, Weijian Li, Wei-Po Wang, and Han Liu.
\newblock Outlier-efficient hopfield layers for large transformer-based models.
\newblock In \emph{The Forty-first International Conference on Machine Learning (ICML)}, 2024.

\bibitem[Kang et~al.(2025)Kang, Kim, Kim, and Hwang]{kang2025see}
Seil Kang, Jinyeong Kim, Junhyeok Kim, and Seong~Jae Hwang.
\newblock See what you are told: Visual attention sink in large multimodal models.
\newblock In \emph{The Thirteenth International Conference on Learning Representations}, 2025.

\bibitem[Kaul et~al.(2025)Kaul, Ma, Elezi, and Deng]{kaul2025attentionactivation}
Prannay Kaul, Chengcheng Ma, Ismail Elezi, and Jiankang Deng.
\newblock From attention to activation: Unraveling the enigmas of large language models.
\newblock In \emph{The Thirteenth International Conference on Learning Representations}, 2025.

\bibitem[Kobayashi et~al.(2020)Kobayashi, Kuribayashi, Yokoi, and Inui]{kobayashi2020attention}
Goro Kobayashi, Tatsuki Kuribayashi, Sho Yokoi, and Kentaro Inui.
\newblock Attention is not only a weight: Analyzing transformers with vector norms.
\newblock In \emph{Proceedings of the 2020 Conference on Empirical Methods in Natural Language Processing}, pages 7057--7075, 2020.

\bibitem[Kovaleva et~al.(2019)Kovaleva, Romanov, Rogers, and Rumshisky]{kovaleva2019dark}
Olga Kovaleva, Alexey Romanov, Anna Rogers, and Anna Rumshisky.
\newblock Revealing the dark secrets of {BERT}.
\newblock In \emph{Proceedings of the 2019 Conference on Empirical Methods in Natural Language Processing and the 9th International Joint Conference on Natural Language Processing}, pages 4365--4374, 2019.

\bibitem[Lee et~al.(2024)Lee, Jin, Kim, Kim, and Park]{lee2024owq}
Changhun Lee, Jungyu Jin, Taesu Kim, Hyungjun Kim, and Eunhyeok Park.
\newblock {OWQ}: Outlier-aware weight quantization for efficient fine-tuning and inference of large language models.
\newblock In \emph{Proceedings of the AAAI Conference on Artificial Intelligence}, volume~38, pages 13355--13364, 2024.

\bibitem[Li et~al.(2024)Li, Yu, Zhang, Liang, He, Chen, and Zhao]{li2024loftq}
Yixiao Li, Yuxin Yu, Qingru Zhang, Chao Liang, Pengcheng He, Weizhu Chen, and Tuo Zhao.
\newblock {LoftQ}: {LoRA}-fine-tuning-aware quantization for large language models.
\newblock In \emph{International Conference on Learning Representations}, 2024.

\bibitem[Lin et~al.(2024)Lin, Tang, Tang, Yang, Chen, Wang, Xiao, Dang, Gan, and Han]{lin2024awq}
Ji~Lin, Jiaming Tang, Haotian Tang, Shang Yang, Wei-Ming Chen, Wei-Chen Wang, Guangxuan Xiao, Xingyu Dang, Chuang Gan, and Song Han.
\newblock {AWQ}: Activation-aware weight quantization for {LLM} compression and acceleration.
\newblock \emph{Proceedings of Machine Learning and Systems}, 6:\penalty0 87--100, 2024.

\bibitem[Liu et~al.(2024)Liu, Li, Li, Li, Zhang, Shen, and Lee]{liu2024llava}
Haotian Liu, Chunyuan Li, Yuheng Li, Bo~Li, Yuanhan Zhang, Sheng Shen, and Yong~Jae Lee.
\newblock Llava-next: Improved reasoning, ocr, and world knowledge, 2024.

\bibitem[Liu et~al.(2020)Liu, Zhou, Zhao, Wang, Deng, and Ju]{liu2020fastbert}
Weijie Liu, Peng Zhou, Zhe Zhao, Zhiruo Wang, Haotang Deng, and Qi~Ju.
\newblock {FastBERT}: a self-distilling {BERT} with adaptive inference time.
\newblock In \emph{Proceedings of the 58th Annual Meeting of the Association for Computational Linguistics}, pages 6035--6044, 2020.

\bibitem[Liu et~al.(2025)Liu, Zhao, Fedorov, Soran, Choudhary, Krishnamoorthi, Chandra, Tian, and Blankevoort]{liu2024spinquant}
Zechun Liu, Changsheng Zhao, Igor Fedorov, Bilge Soran, Dhruv Choudhary, Raghuraman Krishnamoorthi, Vikas Chandra, Yuandong Tian, and Tijmen Blankevoort.
\newblock Spinquant: {LLM} quantization with learned rotations.
\newblock In \emph{The Thirteenth International Conference on Learning Representations}, 2025.

\bibitem[Luo et~al.(2025)Luo, Qiu, Su, Zhou, Mehta, Ye, Hu, and Liu]{luo2025fast}
Haozheng Luo, Chenghao Qiu, Maojiang Su, Zhihan Zhou, Zoe Mehta, Guo Ye, Jerry Yao-Chieh Hu, and Han Liu.
\newblock Fast and low-cost genomic foundation models via outlier removal.
\newblock In \emph{Forty-second International Conference on Machine Learning}, 2025.

\bibitem[Luo et~al.(2026{\natexlab{a}})Luo, Jiang, Hasan, Chen, and Sarkar]{luo2026frost}
Haozheng Luo, Zhuolin Jiang, Md~Zahid Hasan, Yan Chen, and Soumalya Sarkar.
\newblock {FROST}: Filtering reasoning outliers with attention for efficient reasoning.
\newblock In \emph{The First Workshop on Efficient Spatial Reasoning}, 2026{\natexlab{a}}.

\bibitem[Luo et~al.(2026{\natexlab{b}})Luo, Fan, Wang, He, Rahman, Abolmaesumi, and Sigal]{luo2026to}
Jiayun Luo, Wan-Cyuan Fan, Lyuyang Wang, Xiangteng He, Tanzila Rahman, Purang Abolmaesumi, and Leonid Sigal.
\newblock To sink or not to sink: Visual information pathways in large vision-language models.
\newblock In \emph{The Fourteenth International Conference on Learning Representations}, 2026{\natexlab{b}}.

\bibitem[Merity et~al.(2017)Merity, Xiong, Bradbury, and Socher]{merity2016pointer}
Stephen Merity, Caiming Xiong, James Bradbury, and Richard Socher.
\newblock Pointer sentinel mixture models.
\newblock In \emph{The Fifth Conference on International Conference on Learning Representations (ICLR)}, 2017.

\bibitem[Miller(2023)]{miller2021}
Evan Miller.
\newblock Blog post: Attention is off by one, 2023.
\newblock URL \url{https://www.evanmiller.org/attention-is-off-by-one.html}.
\newblock Accessed: July 4, 2024.

\bibitem[Pan et~al.(2025)Pan, Luo, Li, and Liu]{pan2025chainofaction}
Zhenyu Pan, Haozheng Luo, Manling Li, and Han Liu.
\newblock Chain-of-action: Faithful and multimodal question answering through large language models.
\newblock In \emph{The Thirteenth International Conference on Learning Representations}, 2025.

\bibitem[Ran-Milo(2026)]{ranmilo2026sinks}
Yuval Ran-Milo.
\newblock Attention sinks are provably necessary in softmax transformers: Evidence from trigger-conditional tasks.
\newblock \emph{arXiv preprint arXiv:2603.11487}, 2026.

\bibitem[Raposo et~al.(2024)Raposo, Ritter, Richards, Lillicrap, Humphreys, and Santoro]{raposo2024mixture}
David Raposo, Sam Ritter, Blake Richards, Timothy Lillicrap, Peter~Conway Humphreys, and Adam Santoro.
\newblock Mixture-of-depths: Dynamically allocating compute in transformer-based language models.
\newblock \emph{arXiv preprint arXiv:2404.02258}, 2024.

\bibitem[Shao et~al.(2024)Shao, Chen, Zhang, Xu, Zhao, Li, Zhang, Gao, Qiao, and Luo]{shao2024omniquant}
Wenqi Shao, Mengzhao Chen, Zhaoyang Zhang, Peng Xu, Lirui Zhao, Zhiqian Li, Kaipeng Zhang, Peng Gao, Yu~Qiao, and Ping Luo.
\newblock Omniquant: Omnidirectionally calibrated quantization for large language models.
\newblock In \emph{The Twelfth International Conference on Learning Representations}, 2024.

\bibitem[Shkolnik et~al.(2020)Shkolnik, Chmiel, Banner, Shomron, Nahshan, Bronstein, and Weiser]{shkolnik2020robust}
Moran Shkolnik, Brian Chmiel, Ron Banner, Gil Shomron, Yury Nahshan, Alex Bronstein, and Uri Weiser.
\newblock Robust quantization: One model to rule them all, 2020.

\bibitem[Son et~al.(2024)Son, Park, Han, Kim, and Lee]{son2024prefixing}
Seungwoo Son, Wonpyo Park, Woohyun Han, Kyuyeun Kim, and Jaeho Lee.
\newblock Prefixing attention sinks can mitigate activation outliers for large language model quantization.
\newblock In \emph{Proceedings of the 2024 Conference on Empirical Methods in Natural Language Processing}, 2024.

\bibitem[Su and Yuan(2025)]{su2025kvsink}
Zunhai Su and Kehong Yuan.
\newblock {KVS}ink: Understanding and enhancing the preservation of attention sinks in {KV} cache quantization for {LLM}s.
\newblock In \emph{Second Conference on Language Modeling}, 2025.

\bibitem[Sun et~al.(2025)Sun, Liu, Bai, Bao, Zhao, Li, JiaxinHu, Yu, Hou, Yuan, Jiang, Liu, and Yao]{sun2024flatquant}
Yuxuan Sun, Ruikang Liu, Haoli Bai, Han Bao, Kang Zhao, Yuening Li, JiaxinHu, Xianzhi Yu, Lu~Hou, Chun Yuan, Xin Jiang, Wulong Liu, and Jun Yao.
\newblock Flatquant: Flatness matters for {LLM} quantization.
\newblock In \emph{Forty-second International Conference on Machine Learning}, 2025.

\bibitem[Team et~al.(2026)Team, Chen, Zhang, Su, Xu, Pan, Wang, Wang, Chen, Yin, et~al.]{team2026attention}
Kimi Team, Guangyu Chen, Yu~Zhang, Jianlin Su, Weixin Xu, Siyuan Pan, Yaoyu Wang, Yucheng Wang, Guanduo Chen, Bohong Yin, et~al.
\newblock Attention residuals.
\newblock \emph{arXiv preprint arXiv:2603.15031}, 2026.

\bibitem[Vaswani et~al.(2017)Vaswani, Shazeer, Parmar, Uszkoreit, Jones, Gomez, Kaiser, and Polosukhin]{vaswani2017attention}
Ashish Vaswani, Noam Shazeer, Niki Parmar, Jakob Uszkoreit, Llion Jones, Aidan~N. Gomez, \L{}ukasz Kaiser, and Illia Polosukhin.
\newblock Attention is all you need.
\newblock In \emph{The Thirty-first Conference in Neural Information Processing Systems (NeurIPS)}, 2017.

\bibitem[Wei et~al.(2022)Wei, Zhang, Zhang, Gong, Zhang, Zhang, Yu, and Liu]{wei2022outlier}
Xiuying Wei, Yunchen Zhang, Xiangguo Zhang, Ruihao Gong, Shanghang Zhang, Qi~Zhang, Fengwei Yu, and Xianglong Liu.
\newblock Outlier suppression: Pushing the limit of low-bit transformer language models.
\newblock \emph{Advances in Neural Information Processing Systems}, 35:\penalty0 17402--17414, 2022.

\bibitem[Wong et~al.(2026)Wong, Zhang, Mahon, Luk, Isopoussu, and Zhao]{wong2025existence}
Jeffrey T.~H. Wong, Cheng Zhang, Louis Mahon, Wayne Luk, Anton Isopoussu, and Yiren Zhao.
\newblock On the existence and behavior of secondary attention sinks.
\newblock In \emph{ICLR 2026 Workshop on Unifying Concept Representation Learning}, 2026.

\bibitem[Xiao et~al.(2023)Xiao, Lin, Seznec, Wu, Demouth, and Han]{xiao2023smoothquant}
Guangxuan Xiao, Ji~Lin, Mickael Seznec, Hao Wu, Julien Demouth, and Song Han.
\newblock {SmoothQuant}: Accurate and efficient post-training quantization for large language models.
\newblock In \emph{Proceedings of the 40th International Conference on Machine Learning}, pages 38087--38099, 2023.

\bibitem[Xiao et~al.(2024)Xiao, Tian, Chen, Han, and Lewis]{xiao2023efficient}
Guangxuan Xiao, Yuandong Tian, Beidi Chen, Song Han, and Mike Lewis.
\newblock Efficient streaming language models with attention sinks.
\newblock In \emph{The Twelfth International Conference on Learning Representations}, 2024.

\bibitem[Xiao et~al.(2025)Xiao, Tang, Zuo, Guo, Yang, Tang, Fu, and Han]{xiao2025duoattention}
Guangxuan Xiao, Jiaming Tang, Jingwei Zuo, Junxian Guo, Shang Yang, Haotian Tang, Yao Fu, and Song Han.
\newblock Duoattention: Efficient long-context {LLM} inference with retrieval and streaming heads.
\newblock In \emph{International Conference on Learning Representations}, 2025.

\bibitem[Yang et~al.(2025)Yang, Li, Yang, Zhang, Hui, Zheng, Yu, Gao, Huang, Lv, et~al.]{yang2025qwen3}
An~Yang, Anfeng Li, Baosong Yang, Beichen Zhang, Binyuan Hui, Bo~Zheng, Bowen Yu, Chang Gao, Chengen Huang, Chenxu Lv, et~al.
\newblock Qwen3 technical report.
\newblock \emph{arXiv preprint arXiv:2505.09388}, 2025.

\bibitem[Zhang et~al.(2023)Zhang, Sheng, Zhou, Chen, Zheng, Cai, Song, Tian, R{\'e}, Barrett, Wang, and Chen]{zhang2023h2o}
Zhenyu Zhang, Ying Sheng, Tianyi Zhou, Tianlong Chen, Lianmin Zheng, Ruisi Cai, Zhao Song, Yuandong Tian, Christopher R{\'e}, Clark Barrett, Zhangyang Wang, and Beidi Chen.
\newblock {H$_2$O}: Heavy-hitter oracle for efficient generative inference of large language models.
\newblock In \emph{Advances in Neural Information Processing Systems}, volume~36, 2023.

\bibitem[Zhou et~al.(2020)Zhou, Xu, Ge, McAuley, Xu, and Wei]{zhou2020bertlosespatience}
Wangchunshu Zhou, Canwen Xu, Tao Ge, Julian McAuley, Ke~Xu, and Furu Wei.
\newblock {BERT} loses patience: Fast and robust inference with early exit.
\newblock In \emph{Advances in Neural Information Processing Systems}, volume~33, 2020.

\bibitem[Zhu et~al.(2015)Zhu, Kiros, Zemel, Salakhutdinov, Urtasun, Torralba, and Fidler]{Zhu_2015_ICCV}
Yukun Zhu, Ryan Kiros, Rich Zemel, Ruslan Salakhutdinov, Raquel Urtasun, Antonio Torralba, and Sanja Fidler.
\newblock Aligning books and movies: Towards story-like visual explanations by watching movies and reading books.
\newblock In \emph{The IEEE International Conference on Computer Vision (ICCV)}, December 2015.

\bibitem[Zuhri et~al.(2025)Zuhri, Fuadi, and Aji]{zuhri2025softpick}
Zayd M.~K. Zuhri, Erland~Hilman Fuadi, and Alham~Fikri Aji.
\newblock Softpick: No attention sink, no massive activations with rectified softmax.
\newblock \emph{arXiv preprint arXiv:2504.20966}, 2025.

\end{thebibliography}

\end{document}